\def \bdelta{\bm{\delta}}
\def \bdelta{\bm{\delta}}
\DeclareMathOperator*{\argmax}{argmax}
\newtheorem{theorem}{Theorem}
\newtheorem{lemma}{Lemma}
\newtheorem{assumption}{Assumption}
\begin{document}

\title{G-Mix: A Generalized Mixup Learning Framework Towards Flat Minima}

\author{Xingyu~Li,~and~Bo~Tang,~\IEEEmembership{Senior Member,~IEEE}\thanks{Copyright (c) 2015 IEEE. Personal use of this material is permitted. However, permission to use this material for any other purposes must be obtained from the IEEE by sending a request to pubs-permissions@ieee.org.}
	\thanks{Xingyu Li is with the Department of Electrical and Computer Engineering, Mississippi State University, Mississippi State, MS, 39762 USA. e-mail: xl292@msstate.edu. Bo Tang is with the Department of Electrical and Computer Engineering at Worcester Polytechnic Instiute, Worcester, MA. e-mail: btang1@wpi.edu. This research is partially supported by the National Science Foundation, under grant number IIS-2047570, and United States Army Research Office, under grant number W911NF2110290.}%
}

\markboth{Journal of IEEE Transactions on Artificial Intelligence, Vol. 00, No. 0, Month 2020}
{First A. Author \MakeLowercase{\textit{et al.}}: Bare Demo of IEEEtai.cls for IEEE Journals of IEEE Transactions on Artificial Intelligence}

\maketitle
	\begin{abstract}
Deep neural networks (DNNs) have demonstrated promising results in various complex tasks. However, current DNNs encounter challenges with over-parameterization, especially when there is limited training data available. To enhance the generalization capability of DNNs, the Mixup technique has gained popularity. Nevertheless, it still produces suboptimal outcomes. Inspired by the successful Sharpness-Aware Minimization (SAM) approach, which establishes a connection between the sharpness of the training loss landscape and model generalization, we propose a new learning framework called Generalized-Mixup, which combines the strengths of Mixup and SAM for training DNN models. The theoretical analysis provided demonstrates how the developed G-Mix framework enhances generalization. Additionally, to further optimize DNN performance with the G-Mix framework, we introduce two novel algorithms: Binary G-Mix and Decomposed G-Mix. These algorithms partition the training data into two subsets based on the sharpness-sensitivity of each example to address the issue of "manifold intrusion" in Mixup. Both theoretical explanations and experimental results reveal that the proposed BG-Mix and DG-Mix algorithms further enhance model generalization across multiple datasets and models, achieving state-of-the-art performance.

	The generalization problem of the over-parameterization deep neural networks is popular since the rapidly developed large models recently. To address this, this study introduces Generalized-Mixup (G-Mix). Inspired from the successful Sharpness-Aware Minimization (SAM) method and the Mixup method, G-Mix combines their strengths, demonstrating improved performance, especially when training data is limited. Further innovations in this research include the development of two novel algorithms, Binary G-Mix and Decomposed G-Mix, that intelligently partition training data based on each example's sharpness-sensitivity. This approach effectively resolves the "manifold intrusion" problem commonly experienced in Mixup. Both theoretical and experimental evidence underscore the potential of these algorithms to boost the generalization ability of DNNs across various datasets and models, providing a new benchmark in the field.
	\end{abstract}
	
	\begin{IEEEkeywords}
		Deep Neural Network, Generalization, Regularization, Data Augmentation, Mixup
\end{IEEEkeywords}




\section{Introduction}\label{sec:introduction}

\IEEEPARstart{C}{urrent} deep neural networks (DNNs) have achieved remarkable success in various domains, including computer vision \cite{he2016deep,yu2020sprnet,li2021fedlga,wang2022rotation}, speech recognition \cite{stewart2013robust,nassif2019speech}, reinforcement learning \cite{silver2016mastering,bai2019adaptive,li2020adaptive,nguyen2020deep}, and natural language processing \cite{devlin2019bert,wang2020emotion,zheng2020predicting,sakai2021riemannian}. It is worth noting that these successful approaches often employ over-parameterized DNNs. For instance, compared to the sizes of standard image classification benchmarks like CIFAR-100 \cite{krizhevsky2009learning} (with 60k images) and ImageNet-$1$k \cite{krizhevsky2012imagenet} (with 1.2 million images), the number of parameters in DNNs is significantly larger. For example, AlexNet \cite{krizhevsky2012imagenet} has 62 million parameters, while the recently developed ResNet-50 \cite{he2016deep} and BERT \cite{devlin2019bert} have 23 million and 340 million parameters, respectively. 

Although this over-parameterized strategy enables DNNs to accurately memorize the training data distribution and converge based on the Empirical Risk Minimization (ERM) principle \cite{vapnik1999nature}, it can lead to limited generalization ability and over-confident predictions when the testing data distribution differs from the training data. This phenomenon is known as overfitting \cite{srivastava2014dropout,zhang2021understanding}. Recent studies \cite{hoffer2017train,neyshabur2018pac,zhang2021how} have indicated that the quantified difference between testing and training accuracy during the training process serves as an important indicator of the generalization gap.

To address the challenges of generalization and robustness in DNNs, recent research has proposed various approaches that fall into two categories: data augmentation techniques and modified training algorithms. The first category focuses on augmenting the training data \cite{srivastava2014dropout,devries2017improved,zhong2020random,zeng2020regularization} to improve generalization. One effective method in this category is Mixup \cite{zhang2018mixup}, which creates new examples by linearly interpolating training pairs of examples and labels. Mixup has demonstrated significant empirical improvements. The second category involves modifying the training algorithms to prevent the model from converging into sub-optimal solutions. This can be achieved through techniques such as penalization \cite{loshchilov2018decoupled}, normalization \cite{wu2018group}, or perturbation \cite{lin2020extrapolation,zheng2021regularizing}.

While Mixup has shown significant improvements in generalization, it suffers from suboptimal solutions due to its incompatibility with certain training algorithms. In a recent study \cite{foret2020sharpness}, Sharpness-Aware Minimization (SAM) was proposed to reduce the empirical generalization error by leveraging the connection between the generalization ability and a ``flat" training loss landscape. Surprisingly, SAM and Mixup have orthogonal relationships, making their modified training schemes compatible. Hence, there is potential to further enhance the generalization ability of DNN models by integrating these two techniques into a single training framework.

Motivated by this, we introduce a new learning framework called Generalized-Mixup (G-Mix) in this paper. G-Mix integrates the advantages of both Mixup and SAM for DNN model training. Under the G-Mix framework, we enhance the generalization ability of DNNs through two procedures: (i) a data augmentation procedure that employs linear convex interpolation on the given training data, transforming the training loss principle from Empirical Risk Minimization (ERM) to Vicinal Risk Minimization (VRM), and (ii) a weight perturbation procedure that utilizes the model learned by adding worst-case perturbations of VRM loss as a regulator during the training process. Notably, to execute the worst-case perturbation, the G-Mix framework performs an additional back-propagation operation for the perturbed content approximation. Empirical validations demonstrate that the G-Mix framework significantly improves the performance of DNNs compared to both Mixup and SAM. However, despite the impressive empirical performance of Mixup and SAM, the underlying reasons for these improvements have not been fully understood and there remain open problems \cite{carratino2020mixup,gao2020consensus,zhang2021how,shang2021alpha,zhuang2022surrogate,du2022efficient}. To enable theoretical analysis, we relate the loss function induced by G-Mix to the standard ERM training loss with two additional regularization terms from Mixup and SAM. Our theoretical analysis shows how these regularization terms are Lipschitz continuous and can improve the generalization ability of DNNs compared to standard training methods.

Furthermore, we enhance the generalization capability of DNNs through the two-step G-Mix framework to address the ``manifold intrusion" problem. As discussed in the literature \cite{verma2019manifold,guo2019mixup,mai2021metamixup}, ``manifold intrusion" refers to situations where the generated examples from Mixup can conflict with the original training data. We investigate the relationship between ``manifold intrusion" and the weight perturbation process in the proposed G-Mix framework. Specifically, if a training example is more sensitive to loss changes due to weight perturbation, its training loss direction is closer to the ``flat minima" in SAM, reducing the likelihood of conflicts related to ``manifold intrusion."

Based on this insight, we divide the training examples after the Mixup procedure in G-Mix into two subsets: sensitive and less-sensitive. This division is accomplished by calculating a sorted sharpness-sensitivity score. Using this division, we propose two novel algorithms: Binary G-Mix (BG-Mix) and Decomposed G-Mix (DG-Mix). The BG-Mix algorithm disregards the less-sensitive examples in the training batch during the learning process to address ``manifold intrusion" and improve G-Mix computational efficiency. The DG-Mix algorithm decomposes the loss of less-sensitive examples into two components: one parallel and one orthogonal to the averaged direction of the sensitive examples. It then performs an additional back-forward operation on the orthogonal component as a regularization term. Our theoretical and empirical results show that the proposed BG-Mix and DG-Mix algorithms can further improve the performance of DNNs.

To the best of our knowledge, our research is the first to utilize the sensitivity of loss sharpness to guide the interpolation policy in the Mixup method. This algorithm effectively addresses the issue of manifold intrusion without incurring additional computational costs. In summary, the main contributions of this paper are as follows:
\begin{itemize}
	\item \textbf{G-Mix framework.} We introduce the two-step G-Mix learning framework, which combines the strengths of existing Mixup and SAM approaches. The theoretical analysis provided demonstrates that the developed G-Mix framework significantly improves the generalization ability of DNNs. 
	\item \textbf{BG-Mix and DG-Mix algorithms.}  To further enhance the generalization ability and tackle the ``manifold intrusion" problem within the G-Mix framework, we propose two novel algorithms: BG-Mix and DG-Mix. These algorithms provide a binary/decomposed implementation on the training losses of sensitive and less-sensitive examples, respectively, based on the corresponding sharpness-sensitivity calculations.
	\item \textbf{Empirical results.} We conduct extensive experiments on multiple datasets and models, showcasing that the proposed BG-Mix and DG-Mix algorithms outperform existing representative approaches and achieve state-of-the-art performance.
\end{itemize}
The rest of this paper is organized as follows. Sec.~\ref{Sec:relatedwork} describes the literature relevant to our work. Sec.~\ref{Sec:method} details our G-Mix framework, followed by a thorough theoretical analysis. We then illustrate the proposed BG-Mix and DG-Mix algorithms in Sec.~\ref{Sec:algorithm}, and the comprehensive experimental analysis is introduced in Sec.~\ref{Sec:analysis}, followed by a conclusion in Sec.~\ref{Sec:conclusion}. 

\begin{figure*}[t!]
	\centering
	\includegraphics[width = 0.85\columnwidth]{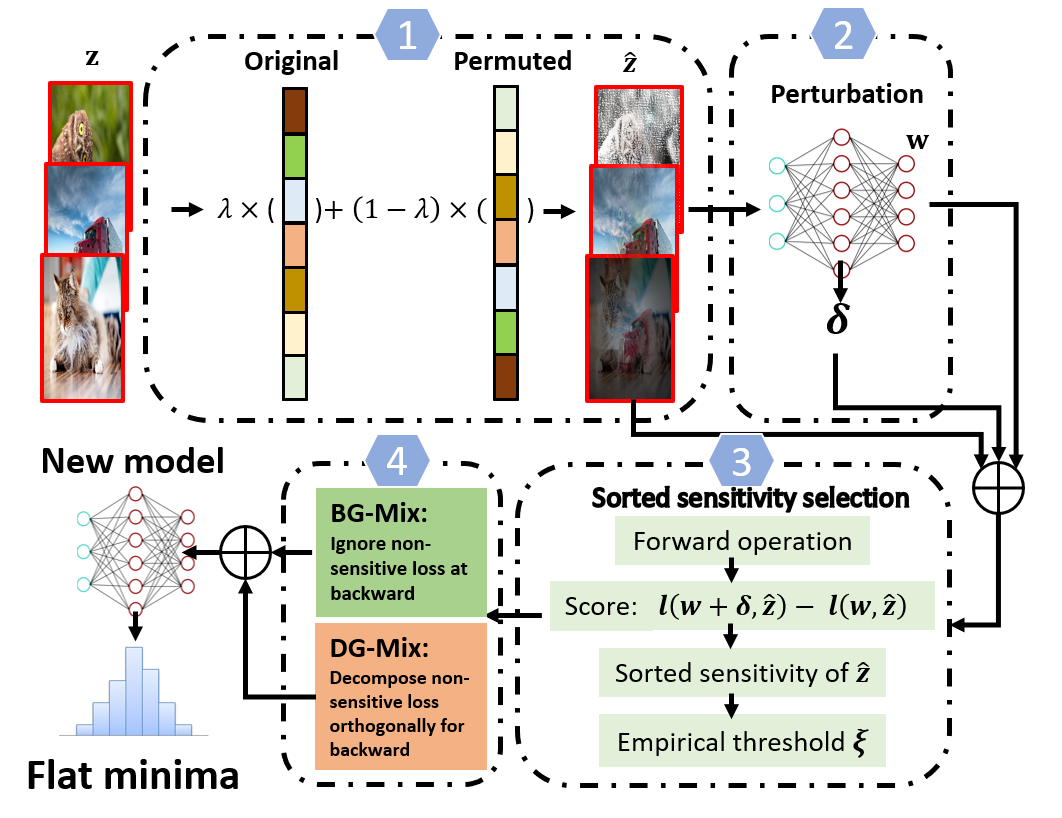}
	\captionsetup{justification=centering}
	
	\caption{The illustration of the proposed G-Mix learning framework and two extended algorithms: BG-Mix and DG-Mix algorithms.}
	\label{fig:overview}
\end{figure*}

\section{Related Works}\label{Sec:relatedwork}
The training process of neural network models requires optimization to enhance their generalization ability and mitigate the issue of overfitting \cite{hochreiter1994simplifying, bishop2006pattern}. With the remarkable success of deep neural networks (DNNs) in various domains \cite{krizhevsky2012imagenet, he2016deep, nassif2019speech, devlin2019bert}, the generalization error arising from the mismatch between testing and training data distributions in over-parameterized DNN models has become increasingly important. Within the framework of gradient descent-based deep learning approaches, efforts to improve generalization ability can be broadly categorized into two main approaches: data augmentation and modified training algorithms.

\subsection{Data Augmentation}

Data augmentation techniques have gained significant importance in recent successful deep learning (DL) applications. These techniques leverage domain knowledge of the dataset to design appropriate transformations for training examples. Existing data augmentation methods employ a range of transformations, including rotation, translation, cropping, resizing, mapping, noise injection \cite{bishop1995training}, horizontal flipping \cite{krizhevsky2012imagenet}, and random erasing \cite{zhong2020random}. These transformations aim to increase the valuable information within the training examples, thereby improving the generalization ability of deep neural networks (DNNs).

One prominent data augmentation technique is Mixup, proposed by \cite{zhang2018mixup}. Mixup generates synthetic training examples by linearly interpolating between pairs of input data samples and their corresponding labels. This training strategy has empirically demonstrated enhanced generalization across a wide range of tasks, including supervised \cite{lamb2019interpolated, guo2019mixup, verma2019manifold} and semi-supervised \cite{verma2019manifold, mai2021metamixup} learning scenarios. The success of Mixup has motivated the development of several variant techniques.

For instance, Manifold Mixup extends the convex interpolation from the input data space to the feature space \cite{verma2019manifold}. AdaMixup addresses the ``manifold intrusion" problem in Mixup by learning the interpolation policy using an additional network \cite{guo2019mixup}. CutMix generates synthetic examples by blending a random rectangular region from one sample into another \cite{yun2019cutmix}. Additionally, PuzzleMix \cite{kim2020puzzle} and MetaMixup \cite{mai2021metamixup} dynamically learn the interpolation policy of Mixup by leveraging saliency information \cite{Simonyan14a} and meta-learning optimization \cite{finn2017model}, respectively.

These various data augmentation techniques and their extensions aim to exploit the inherent structures and patterns in the data to improve the generalization capability of DNNs. By incorporating domain-specific knowledge into the training process, these methods help DNNs learn more robust and representative features, ultimately leading to improved performance on a variety of tasks.
\subsection{Modified Training Algorithms}
In addition to data-dependent augmentation techniques, another approach to improving the generalization of DNNs is through modifications to existing training methods. These modification approaches introduce new objectives or regularization techniques to the standard training process. They aim to provide additional constraints or guidance to the model learning process, leading to improved generalization.

Several regularization methods have been proposed in the literature. For example, weight decay, introduced by \cite{loshchilov2018decoupled}, regulates the training loss by adding a penalty term to the weight updates. Dropout, proposed by \cite{srivastava2014dropout}, modifies the model architecture by randomly dropping out units during training. Shake-Shake regularization, presented by \cite{gastaldi2017shake}, involves stochasticity in the model's skip connections. Normalization techniques such as batch normalization \cite{ioffe2015batch} and group normalization \cite{wu2018group} have also been utilized during training. Additionally, weight perturbation methods, like those proposed by \cite{lin2020extrapolation, zheng2021regularizing}, apply perturbations to the weights of the DNN model to improve generalization.

Recent studies \cite{lecun2012efficient, keskar2017large, zhang2021understanding} have observed a positive correlation between the generalization ability of over-parameterized DNNs and the sharpness of the loss landscape. Motivated by this phenomenon, \cite{foret2020sharpness} introduced the Sharpness-Aware Minimization (SAM) method. SAM adds a maximized weight perturbation to leverage the connection between minimizing the generalization error and finding "flat" minima in the loss landscape. Similarly, \cite{zheng2021regularizing} proposed a method similar to SAM, which improves generalization on various DL benchmarks through weight perturbation. Moreover, \cite{du2022efficient} focused on enhancing the efficiency of SAM implementation, while \cite{zhuang2022surrogate} introduced an auxiliary objective function to provide a guarantee for finding flat minima.

These modification approaches provide alternative training strategies that aim to improve the generalization ability of DNNs. By incorporating additional regularization techniques and leveraging the properties of the loss landscape, they offer new perspectives and avenues for achieving better generalization performance in deep learning.

\section{G-Mix Framework}\label{Sec:method}
In this section, we present the proposed G-Mix learning framework, along with the BG-Mix and DG-Mix algorithms explained in Section \ref{Sec:algorithm}. To provide a clear overview of our approach, we depict the framework and algorithms in Figure \ref{fig:overview}. The steps involved in the two-step G-Mix learning framework and the proposed algorithms are as follows: 

(1). Augmentation procedure: The augmentation procedure involves data augmentation on the given training data examples through random linear interpolation. (2). Weight perturbation procedure: The weight perturbation procedure adds approximated weight perturbations to the model. This is achieved by performing an additional backward propagation operation using the augmented data obtained in step (1). (3). Sensitivity selection: The sensitivity selection step involves determining the sensitivity of the linearly interpolated augmented data. This is accomplished by leveraging the previously obtained weight perturbations. The sensitivity measures the response of the loss to changes in the weights and is used to select the appropriate augmented data. (4). Achieving ``flat minima": The final step aims to achieve ``flat minima" by utilizing the updated model after applying the proposed binary or decomposed algorithms. These algorithms, namely BG-Mix and DG-Mix, ensure that the model converges to regions in the loss landscape with flatter minima, thereby improving the generalization ability of the DNN.

\subsection{Notations}
Suppose that $\mathcal{X}$ denotes the input dataset where each training example $\bm{x}$ has $p$ features, and $y \in \mathcal{Y}$ is the set of labels with $m$ classes. The general parameterized loss used in this paper is defined as $l(\bm{w}, \bm{z})$, where $\bm{w}$ denotes $d$-dimension parameters of a trainable DNN model that $\bm{w} \subseteq \mathbb{R}^{d}$, and $\bm{z} = (\bm{x}, y)$ represents the pair of input data and its associated output label. Hence, the given dataset of $n$ training examples is $\mathcal{Z} = \{\bm{z}_1, \dots, \bm{z}_n\}$, which is indexed by $i$ that $\bm{z}_i = (\bm{x}_i, y_i)$ could be regarded as i.i.d. drawn from a joint distribution $\mathcal{P}_{\bm{x},y} = \mathcal{D}$, satisfying $\bm{x}_i \in \mathcal{X} \subseteq \mathbb{R}^{p}$ and $y_i \in \mathcal{Y} \subseteq \mathbb{R}^{m}$.

Moreover, the standard empirical risk minimization (ERM) loss could be defined as $L_{\mathcal{S}} (\bm{w}, \bm{z}) \triangleq \mathbb{E}_{\bm{z} \sim \mathcal{S}} l(\bm{w}, \bm{z}) = \frac{1}{n} \sum_{i=1}^{n} l(\bm{w}_i, \bm{z}_i)$, and the corresponding population loss is $L_{\mathcal{D}} (\bm{w}, \bm{z}) \triangleq \mathbb{E}_{\bm{z} \sim \mathcal{D}} [l (\bm{w}, \bm{z})]$. Note that in this case, as $\mathcal{D}$ is not known to the network, using $L_{\mathcal{S}}$ as an estimation of $L_{\mathcal{D}}$ becomes the major motivation to train modern DNN models, such as SGD and Adam. Additionally, we denote the gradient of a general loss objective $L$ concerning $\bm{z}$ and $\bm{w}$ as  $\nabla_{\bm{z}} L(\bm{w}, \bm{z}) $ and $\nabla_{\bm{w}} L(\bm{w}, \bm{z})$ for model updates. 

\subsection{Two-Step G-Mix Framework with Mixup and SAM}

In this section, we introduce our proposed new learning framework, called Generalized-Mixup (G-Mix), which combines the advantages of Mixup and SAM. As depicted in Figure \ref{fig:overview}, the G-Mix framework aims to approximate the desired population loss $L_{\mathcal{D}}$ using the given training data inputs. The framework consists of two main steps, as shown in Step 1 and Step 2 in Figure \ref{fig:overview}.

Step 1: Integration of Mixup and SAM: In this step, we formulate a new training loss objective by integrating Mixup into SAM. This integration leverages the strengths of both techniques to improve the generalization performance of DNNs. Mixup generates synthetic training examples through linear interpolation, while SAM enhances the generalization ability by exploring the sharpness of the loss landscape. By combining these two methods, we create a new learning framework that improves upon the limitations of individual approaches.

Step 2: Augmentation and weight perturbation: This step involves the augmentation of training data examples through linear interpolation, similar to Mixup. The augmented data is then used to perform weight perturbation, similar to SAM, by adding approximated weight perturbations through an additional backward propagation operation. This procedure enhances the model's ability to generalize by exploring the "flat minima" in the loss landscape.

The G-Mix framework, as described in these two steps, provides a comprehensive learning framework that enhances the generalization performance of DNNs. The theoretical analysis and empirical studies conducted on the framework demonstrate its effectiveness. Furthermore, based on the G-Mix framework, we develop two novel learning algorithms, as shown in steps 3 and 4 in Figure \ref{fig:overview}. These algorithms, called Binary G-Mix (BG-Mix) and Decomposed G-Mix (DG-Mix), further refine the training process to address the ``manifold intrusion" problem and improve the generalization ability of the DNN models. The details of these algorithms are presented in the next section.


\subsubsection{Step of Mixup}
Given two labeled  training data pairs as $\bm{z}_i = (\bm{x}_i, y_i)$ and $\bm{z}_j = (\bm{x}_j, y_j)$, the Mixup \cite{zhang2018mixup} uses a linear convex interpolation method to generate new data: $\hat{\bm{x}}_{i,j} (\lambda) = \lambda \bm{x}_i + (1-\lambda) \bm{x}_j$ and $\hat{y}_{i,j} (\lambda) = \lambda y_i + (1-\lambda) y_j$, where $\lambda \in [0,1]$ is randomly drawn from a Beta distribution Beta$(\alpha, \alpha)$, $\alpha \in (0, \infty)$. With newly generated data pairs $\bm{\hat{z}}_{i,j} (\lambda) = (\bm{\hat{x}}_{i,j} (\lambda), {\hat{y}}_{i,j} (\lambda))$, $i,j=1, \dots, n$, the ERM loss objective $L_{\mathcal{S}}$ would turn into the VRM loss $L^{mix}$ as
\begin{equation}\label{Eq:mixup}
	L^{Mix} (\bm{w}, \bm{\hat{z}}) =  \frac{1}{n^{2}} \sum_{i, j=1}^{n} \mathbb{E}_{\lambda \sim \mathcal{D}_\lambda} l( \bm{w}, \bm{\hat{z}}_{i,j}), 
\end{equation}
where $\mathcal{D}_\lambda$ is the distribution of linearly interpolated $\mathcal{D}$ due to $\lambda$. Note that as the parameter settings of $\lambda$ are not our focus in this work, we follow the original work \cite{zhang2018mixup} that set the Beta$(\alpha, \alpha)$ with $\alpha = 1$ by default.

\subsubsection{Step of SAM}   
Next, we further improve the obtained loss objective $L^{mix}$ using the SAM technique. Unlike the Mixup method which focuses on the interpolation of the given training examples, the SAM method improves the generalization ability of DNNs by minimizing the sharpness of the training loss landscape. To achieve this, instead of updating model weights $\bm{w}$ by simply minimizing $L^{mix}$, the developed G-Mix framework finds parameter values of $\bm{w}$ to ensure that the neighborhoods around $\bm{w}$ within a constrained radius can have uniformly minimized loss. Note that the loss objective discussed in this paper could be many forms of functions, such as linear and logistic regression, and also non-convex cross-entropy for DNNs, which can have multiple global minima solutions that share significantly different generalization abilities towards $\mathcal{L}_{\mathcal{D}}$. 

With the SAM technique, the G-Mix framework formulates the $L^{mix}$ loss function into the following min-max optimization problem 
\begin{equation}\label{Eq:loss_gmix}
	L^{GMix}(\bm{w}, \bm{\hat{z}}) = \min_{\bm{w}} \max_{\|\bdelta \|_2 \leq \rho} L^{Mix} (\bm{w}+\bdelta, \bm{\hat{z}} ),
\end{equation}  
where with a given model $\bm{w}$, the inner optimization operation attempts to find $\bdelta$, which is the weight perturbation that maximizes the VRM loss of $L^{Mix}$ within a Euclidean ball of radius $\rho$ around $\bm{w}$. For better presentation, we define the maximized loss around $\bm{w}$ to be the sum of the Vicinal risk loss and the sharpness as $R^{GMix} (\bm{w}, \bm{\hat{z}}) = \max_{\bdelta: \|\bdelta \|_2 \leq \rho} L^{Mix} (\bm{w}+\bdelta, \bm{\hat{z}} ) - L^{Mix} (\bm{w}, \bm{\hat{z}})$. As such, $R^{GMix} (\bm{w}, \bm{\hat{z}})$ represents the quantified maximal change of the corresponding VRM loss under $L^{Mix}$ with the added constrained perturbation $\bdelta$ at $\bm{w}$, which encourages the G-Mix framework to find the desired flat minima for generalization improvement. 

As justified in \cite{foret2020sharpness, kwon2021asam}, given a certain state of $\bm{w}$, the desired population loss $L_{\mathcal{D}}$ of the DNN model could be upper-bounded by the combination of the previously obtained $L^{Mix}$, sharpness $R^{GMix}$, and a regularization term as
\begin{equation}\label{Eq:loss_upperbound}
	L_{\mathcal{D}} (\bm{w} + \bdelta, \bm{\hat{z}}) \leq R^{GMix} (\bm{w}, \bm{\hat{z}}) + L^{Mix} (\bm{w},\bm{\hat{z}}) +  \mathcal{R}(\|\bm{w} \|_2^{2}), 
\end{equation}
where $\mathcal{R}(\|\bm{w} \|_2)$ represents the regularization term which is on the norm of $\bm{w}$. Note that in this paper, we omit the discussion of the regularization term $\mathcal{R}(\|\bm{w} \|_2)$ and focus on the first two components on the right side of Eq.~\eqref{Eq:loss_upperbound}, where $R^{GMix}$ and $L^{Mix}$ are considered to be the contributions from the SAM and Mixup techniques, respectively. 

The main challenge in solving the min-max optimization problem in Eq.~\eqref{Eq:loss_gmix} is to efficiently find the maximized value $\bdelta$ within the constraint radius $\rho$. In this work, we first perform one back-propagation operation on $L^{Mix}(\bm{w}, \bm{\hat{z}})$ to obtain the gradient $\bm{w}$. Then, we approximate the inner maximization content $\bdelta$ by applying the first-order Taylor expansion at $L^{Mix}(\bm{w},\bm{\hat{z}})$ as 
\begin{equation}\label{Eq:delta} 
	\begin{split}
		\bdelta & \triangleq \argmax_{\|\bdelta\|_2 \leq \rho} L^{Mix}(\bm{w}+ \bdelta, \bm{\hat{z}} ) \\
		& = \argmax_{\|\bdelta \|_2 \leq \rho} L^{Mix} (\bm{w},\bm{\hat{z}}) + \bdelta^{\top} \nabla_{\bm{w}} L^{Mix} (\bm{w},\bm{\hat{z}}) + \mathcal{O}(\rho^2) \\
		&  \approx \argmax_{\|\bdelta \|_2 \leq \rho} L^{Mix} (\bm{w},\bm{\hat{z}}) + \bdelta^{\top} \nabla_{\bm{w}} L^{Mix} (\bm{w},\bm{\hat{z}}) \\
		& \overset{(a)}{=} \rho \frac{\nabla_{\bm{w}} L^{Mix}  (\bm{w},\bm{\hat{z}})}{\|\nabla_{\bm{w}} L^{Mix}(\bm{w},\bm{\hat{z}})\|_2},\\
	\end{split}
\end{equation}
where $\mathcal{O}(\rho^2)$ represents higher terms of the Taylor expansion at $L^{Mix} (\bm{w})$ which is omitted in our work. The last step in (a) holds because $\bdelta$ has the same direction of $ L^{Mix}  (\bm{w})$ where $\frac{\nabla_{\bm{w}} L^{Mix} (\bm{w},\bm{\hat{z}})}{\|\nabla_{\bm{w}} L^{Mix}(\bm{w},\bm{\hat{z}})\|_2} \leq 1$ and the result of $\bdelta$ satisfies the maximal constrain $\|\bdelta\| \leq \rho$, as illustrated in \cite{zhuang2022surrogate}. As such, the min-max optimization problem in Eq.~\eqref{Eq:loss_gmix} can be reduced into a perturbed weight optimization objective as 
\begin{equation}\label{Eq:gmix}
	\min_{\bm{w}} L^{GMix} (\bm{w},\bm{\hat{z}}) \approx \min_{\bm{w}}  \frac{1}{n^{2}} \sum_{i, j=1}^{n} \mathbb{E}_{\lambda \sim \mathcal{D}_\lambda} l( \bm{w} + \bdelta, \bm{\hat{z}_{i,j}}). 
\end{equation}
To this end, we can summarize the developed two-step G-Mix learning framework in Algorithm.~\ref{alg:framework}, when one training batch $\mathcal{B} \in \mathcal{S}$ is given. Specifically, during the Mixup procedure we perform the linear interpolation by uniformly permuting the index of $\mathcal{B}$, with a randomly generated hyper-parameter $\lambda$ from Beta$(\alpha, \alpha)$. And for the SAM procedure, we first use one extra back-propagation operation on $ L^{Mix}  (\bm{w},\bm{\hat{z}})$ obtained from the Mixup procedure to compute $\bdelta$, then update the model $\bm{w}$ at lines $(17)-(18)$ for solving the formulated minimization problem of G-Mix in Eq.~\eqref{Eq:gmix}, where $\bm{g}$ is the shorthand for $\bm{g} = \nabla_{\bm{w}} L^{Mix} (\bm{w} + \rho \frac{\nabla_{\bm{w}} L^{Mix}  (\bm{w},\bm{\hat{z}})}{\|\nabla_{\bm{w}} L^{Mix}(\bm{w},\bm{\hat{z}})\|_2})$.

\begin{algorithm}[tb]
	\caption{Two-step G-Mix framework}
	\label{alg:framework}
	\begin{algorithmic}[1]
		\STATE {\bfseries Input:} 
		Given training batch $\mathcal{S} = \{\bm{z}_1, \dots, \bm{z}_i, \dots, \bm{z}_n\}$, $\bm{z}_i = (\bm{x}_i, y_i)$; $\alpha$ (the beta distribution parameter); learning rate $\eta$; empirical radius $\rho$; maximum training epoch $T$.
		
		\FOR {$\text{each training epoch t from 0 to T}$}
		\IF {$\text{training epoch t is 0}$}
		\STATE Initialize model $\bm{w}^{0}$.
		\ENDIF
		
		\STATE $\underline{\text{Mixup procedure:}}$
		\STATE Sample a training batch $\mathcal{B} = \{\bm{z}_i\} \in \mathcal{S}$.
		\STATE Generate $\lambda = \text{numpy.random.beta}(\alpha, \alpha)$.
		\STATE Permute $\text{index} = \text{torch.randperm}(\bm{z}.size(0))$.
		\STATE $\bm{z}_j = \bm{z}_i[index]$: $\bm{x}_j = \bm{x}_i[index]$ and $y_j = y_i[index]$.
		\STATE $\hat{\bm{x}}_{i,j} (\lambda) = \lambda \bm{x}_i + (1-\lambda) \bm{x}_j$.
		\STATE $\hat{y}_{i,j} (\lambda) = \lambda y_i + (1-\lambda) y_j$.
		\STATE Update the loss objective from $L_{\mathcal{S}}$ to $L^{Mix}$ using Eq.~\eqref{Eq:mixup}.			
		\STATE $\underline{\text{SAM procedure:}}$
		\STATE Perform one back-propagation on $L^{Mix} (\bm{w}, \bm{\hat{z}})$.
		\STATE Compute $\bdelta$ with the $\nabla_{\bm{w}} L^{Mix} (\bm{w},\bm{\hat{z}})$ using Eq.~\eqref{Eq:delta}.
		\STATE Compute gradient $\bm{g} = \nabla_{\bm{w}} L^{Mix} (\bm{w} + \rho \frac{\nabla_{\bm{w}} L^{Mix}  (\bm{w})}{\|\nabla_{\bm{w}} L^{Mix}(\bm{w})\|_2})$.
		\STATE Update $\bm{w} = \bm{w} - \eta \bm{g}$ for solving $L^{GMix} (\bm{w}, \bm{\hat{z}})$ in Eq.~\eqref{Eq:gmix}.			
		\ENDFOR
	\end{algorithmic}
\end{algorithm}

\subsection{Theoretical Analysis}\label{Subsec:theo}

In literature, despite many empirical successes achieved by Mixup and SAM methods individually, how they theoretically improve the generalization ability of DNNs has not been well investigated and remains an open problem. As such, in this paper, we provide the theoretical analysis of the developed G-Mix learning framework. Note that as inspired by \cite{zhang2021how}, we analyze both the Mixup and SAM techniques from a regularization view of their corresponding training loss objectives, respectively. To present our analysis, we introduce the following assumption, lemmas, theorems, and proofs.    

\begin{assumption}\label{Assump:1}
	\emph{(Lipschitz Smooth).} Towards the given training dataset $\mathcal{S}$, the ERM loss function $L_{\mathcal{S}}$ is differentiable and the gradients $\nabla_{\bm{w}} L(\bm{w}, \bm{z})$ and $\nabla_{\bm{z}} L(\bm{w}, 
	\bm{z})$ are Lipschitz continuous with respect to $\bm{w}$ and $\bm{z}$ as
	\begin{equation}\label{Eq:assump}
		\begin{split}
			& \|\nabla_{\bm{w}} L(\bm{u}, \bm{z}) - \nabla_{\bm{w}} L(\bm{v}, \bm{z}) \| \leq \kappa_1 \|\bm{u} - \bm{v}\|, \nonumber \\
			& \|\nabla_{\bm{z}} L(\bm{w}, \bm{p}) - \nabla_{\bm{z}} L(\bm{w}, \bm{q}) \| \leq \kappa_2 \|\bm{p} - \bm{q}\|, \nonumber 
		\end{split}
	\end{equation}
	where $\bm{p}, \bm{q}$ are denoted as two arbitrary input training examples i.i.d drawn from $\mathcal{S}$ and $\bm{u}, \bm{v}$ represent two $\bm{w}$ states.
\end{assumption}

\begin{lemma}
	\emph{(Mixup Lipschitz Continuity.)} When the ERM loss is $\kappa_2$-Lipschitz continuous, the corresponding $L^{Mix}$ of the Mixup method satisfies
	\begin{equation}\label{Eq:lemma_mixup}
		\begin{split}
			& ||L^{Mix} (\bm{w}, \bm{\hat{z}}) - [\lambda L^{Mix} (\bm{w}, \bm{z}_i)   \\
			& \quad + (1-\lambda) L^{Mix} (\bm{w}, \bm{z}_j)  ] ||\leq C || \bm{z}_i - \bm{z}_j||^2,
		\end{split}
	\end{equation}
	where $C$ is a constant of $(\lambda, \bm{z}_i, \bm{z}_j)$ that satisfies $C = \frac{1}{n^2} \sum_{i,j =1}^{n} (\frac{|\lambda (\lambda-1)|}{2} \kappa_2 \|\bm{z}_i - \bm{z}_j \| )$. 
\end{lemma}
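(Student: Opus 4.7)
The plan is to reduce this lemma to the classical ``convexity gap'' inequality for functions with Lipschitz gradients: if $f$ has $\kappa$-Lipschitz gradient, then
$$\bigl| f(\lambda a + (1-\lambda) b) - \lambda f(a) - (1-\lambda) f(b) \bigr| \leq \tfrac{\kappa}{2}\lambda(1-\lambda)\|a-b\|^2.$$
Applied with $f(\bm{z}) = l(\bm{w}, \bm{z})$ at fixed $\bm{w}$, Assumption~\ref{Assump:1} (second inequality) guarantees $\kappa_2$-Lipschitz gradient in $\bm{z}$, which yields a per-pair bound on the gap between the mixed-sample loss $l(\bm{w},\hat{\bm{z}}_{i,j})$ and the convex combination $\lambda l(\bm{w}, \bm{z}_i) + (1-\lambda) l(\bm{w}, \bm{z}_j)$. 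Averaging these per-pair bounds over $(i,j)$ and the randomness of $\lambda$ will recover the $L^{Mix}$ expressions on both sides of the stated inequality.

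Concretely, I would proceed in four steps. First, derive the smoothness/convexity-gap inequality above from Taylor's theorem with integral remainder, using $\|\nabla_{\bm{z}} l(\bm{w}, \bm{p}) - \nabla_{\bm{z}} l(\bm{w}, \bm{q})\| \leq \kappa_2 \|\bm{p}-\bm{q}\|$ to control the residual. Second, instantiate this inequality at $a=\bm{z}_i$, $b=\bm{z}_j$ to get
$$\bigl|\, l(\bm{w}, \hat{\bm{z}}_{i,j}) - \lambda l(\bm{w}, \bm{z}_i) - (1-\lambda) l(\bm{w}, \bm{z}_j) \,\bigr| \leq \tfrac{\kappa_2}{2}\,|\lambda(1-\lambda)|\,\|\bm{z}_i - \bm{z}_j\|^2.$$
Third, average both sides over $(i,j)\in\{1,\dots,n\}^2$ and take the expectation over $\lambda\sim\mathcal{D}_\lambda$, so that the left-hand side aggregates, via the triangle inequality, into the difference $L^{Mix}(\bm{w},\hat{\bm{z}}) - [\lambda L^{Mix}(\bm{w},\bm{z}_i) + (1-\lambda)L^{Mix}(\bm{w},\bm{z}_j)]$ in the sense used in the lemma. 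Fourth, rearrange the resulting per-pair residuals to match the stated constant $C$.

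The main obstacle I expect is reconciling the constant $C$ as written, which contains a factor $\|\bm{z}_i-\bm{z}_j\|$ inside the sum while the right-hand side still carries $\|\bm{z}_i-\bm{z}_j\|^2$. My interpretation is that one $\|\bm{z}_i-\bm{z}_j\|$ factor is absorbed into $C$ to convert the per-pair quadratic bound into a single scalar prefactor (so that $C$ functions as an averaged Lipschitz-type constant and the residual on the right is understood as a representative pair), and the remaining $\|\bm{z}_i-\bm{z}_j\|^2$ is the convexity-gap quadratic term. Under this reading, the inequality follows directly from the smoothness step above together with an application of the Lipschitz continuity of $\nabla_{\bm{z}} l$ to bound $\|\nabla_{\bm{z}} l(\bm{w}, \bm{z}_i) - \nabla_{\bm{z}} l(\bm{w}, \bm{z}_j)\|$ by $\kappa_2\|\bm{z}_i - \bm{z}_j\|$, after which the remaining algebra is routine.
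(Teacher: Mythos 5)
Your proposal follows essentially the same route as the paper's proof: the paper likewise derives the per-pair convexity-gap bound $\frac{|\lambda(\lambda-1)|}{2}\kappa_2\|\bm{z}_i-\bm{z}_j\|^2$ via the integral (Taylor-remainder) representation of $l(\bm{w},\lambda\bm{z}_i+(1-\lambda)\bm{z}_j)$, Cauchy--Schwarz, and the Lipschitz assumption on $\nabla_{\bm{z}}l$, then averages over the pairs $(i,j)$. Your observation about the stray factor of $\|\bm{z}_i-\bm{z}_j\|$ inside $C$ is a genuine inconsistency in how the lemma's constant is stated (the paper's own derivation actually produces the clean quadratic per-pair bound), not a gap your argument needs to repair.
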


\begin{proof}
	Note that in order for this paper to be self-contained, we restate the proof from Proposition~1 in \cite{mai2021metamixup}. We start from the $L^{Mix}$ loss objective in Eq.~\eqref{Eq:mixup} that 
	\begin{equation}
		L^{Mix} (\bm{w}, \bm{\hat{z}})= \frac{1}{n^{2}} \sum_{i, j=1}^{n} \mathbb{E}_{\lambda \sim \mathcal{D}_\lambda} l( \bm{w}, (\lambda \bm{z}_i + (1-\lambda) \bm{z}_j )),
	\end{equation}
	we focus on the right side term $l( \bm{w}, (\lambda \bm{z}_i + (1-\lambda) \bm{z}_j ))$ that 
	\begin{equation}\label{Eq:lemma_proof_1}
		\begin{split}
			&l( \bm{w}, (\lambda \bm{z}_i + (1-\lambda) \bm{z}_j ))\\
			& = l( \bm{w}, \bm{z}_j) + \lambda \int_{0}^{1} \langle \nabla_{\bm{z}} l(\bm{w}, (\lambda s \bm{z}_i + (1-\lambda s) \bm{z}_j )), \bm{z}_i - \bm{z}_j \rangle ds     \\
			& = l( \bm{w}, \bm{z}_j) + \lambda( l( \bm{w}, \bm{z}_i) - l( \bm{w}, \bm{z}_j) ) \\
			& + \lambda (\int_{0}^{1} \langle \nabla_{\bm{z}} l(\bm{w}, (\lambda s \bm{z}_i + (1-\lambda s) \bm{z}_j )), \bm{z}_i - \bm{z}_j \rangle ds) \\
			&  -  \lambda ( l( \bm{w}, \bm{z}_i) - l( \bm{w}, \bm{z}_j) ), \nonumber
		\end{split}
	\end{equation}
	then, by adding the results of last three lines in Eq.~\eqref{Eq:lemma_proof_1} back to the left side of Eq.~\eqref{Eq:lemma_mixup} we have 
	\begin{equation}\label{Eq:lemma_proof_2}
		\begin{split}
			& L^{Mix} (\bm{w}, \bm{\hat{z}}) - [\lambda L^{Mix} (\bm{w}, \bm{z}_i)   + (1-\lambda) L^{Mix} (\bm{w}, \bm{z}_j)  ] \\
			& \leq  \frac{1}{n^{2}} \sum_{i, j=1}^{n} [  \lambda l( \bm{w}, \bm{z}_i) + (1- \lambda) l( \bm{w}, \bm{z}_j)  + C_1] \\
			& - ( \lambda \frac{1}{n} \sum_{i=1}^{n}   l( \bm{w}, \bm{z}_i) + (1- \lambda ) \frac{1}{n} \sum_{j=1}^{n}   l( \bm{w}, \bm{z}_j)), 
		\end{split}
	\end{equation}
	where $C_1 = \lambda (\int_{0}^{1} \langle \nabla_{\bm{z}} l(\bm{w}, (\lambda s \bm{z}_i + (1-\lambda s) \bm{z}_j )), \bm{z}_i - \bm{z}_j \rangle ds - ( l( \bm{w}, \bm{z}_i) - l( \bm{w}, \bm{z}_j) ))$, as $\bm{z}_j$ in this paper is obtained by uniformly permuting $\bm{z}_{i}$ that shares the same distribution. Therefore, we expand $C_1$ with $l( \bm{w}, \bm{z}_i) - l( \bm{w}, \bm{z}_j) = \int_{0}^{1} \langle  \nabla_{\bm{z}} l(\bm{w}, s\bm{z}_i + (1-s)\bm{z}_j), \bm{z}_i - \bm{z}_j \rangle ds$ as illustrated in \cite{mai2021metamixup} and can obtain
	\begin{equation}\label{Eq:lemma_proof_3}
		\begin{split}
			C_1 & = \lambda \int_{0}^{1} \langle \nabla_{\bm{z}} l(\bm{w}, (\lambda s \bm{z}_i + (1-\lambda s) \bm{z}_j )) \\
			&- \nabla_{\bm{z}} l(\bm{w}, s\bm{z}_i + (1-s)\bm{z}_j), \bm{z}_i - \bm{z}_j \rangle ds \\
			& \overset{(a)}{\leq} \lambda \int_{0}^{1} \|\nabla_{\bm{z}} l(\bm{w}, (\lambda s \bm{z}_i + (1-\lambda s) \bm{z}_j )) \\
			& - \nabla_{\bm{z}} l(\bm{w}, s\bm{z}_i + (1-s)\bm{z}_j) \| \|\bm{z}_i - \bm{z}_j \| ds \\
			& \overset{(b)}{\leq} \lambda \int_{0}^{1} \kappa_2  \| (\lambda-1) s\bm{z}_i + (1-\lambda) s\bm{z}_j \| \|\bm{z}_i - \bm{z}_j \|  ds \\
			& = \lambda \int_{0}^{1} \kappa_2 |(\lambda-1)| s \|\bm{z}_i - \bm{z}_j \| ^{2} ds \\
			& = \frac{|\lambda (\lambda-1)|}{2} \kappa_2 \|\bm{z}_i - \bm{z}_j \|^{2}, 
		\end{split}
	\end{equation}
	where $(a)$ follows the Cauchy-Schwartz inequality, and $(b)$ satisfies the Assumption.~\ref{Assump:1}. Then, by adding up the results in Eq.~\eqref{Eq:lemma_proof_3} to the left side of Eq.~\eqref{Eq:lemma_mixup},  the proof is done.
\end{proof}

\begin{theorem}
	\emph{(G-Mix framework Lipschitz Continuity.)} The gradient $\nabla L^{GMix} (\bm{w} + \bdelta, \bm{\hat{z}})$ of the developed G-Mix framework with respect to $\bm{w}$ as $\nabla_{\bm{w}} L^{Mix} (\bm{w} + \bdelta, \bm{\hat{z}})$ is Lipschitz continuous with constant $2\kappa_{1}+ 2\kappa_{1}^{2}\rho$, and to $\bm{z}$ as $\nabla_{\bm{z}} L^{Mix} (\bm{w} + \bdelta, \bm{\hat{z}})$ with Lipschitz constant $2C$.
\end{theorem}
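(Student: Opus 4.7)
The plan is to prove the two Lipschitz statements separately, invoking Assumption~\ref{Assump:1} for the $\bm{w}$-direction and Lemma~1 for the $\bm{z}$-direction, while exploiting the closed-form expression of $\bdelta$ from Eq.~\eqref{Eq:delta}.

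For the $\bm{w}$-Lipschitz constant $2\kappa_1 + 2\kappa_1^2 \rho$, first view $\bdelta$ as a function of $\bm{w}$ via Eq.~\eqref{Eq:delta} and, for two weight states $\bm{u},\bm{v}$, apply the Lipschitz property of $\nabla_{\bm{w}} L^{Mix}$ to bound $\|\nabla_{\bm{w}} L^{Mix}(\bm{u}+\bdelta_u,\hat{\bm{z}}) - \nabla_{\bm{w}} L^{Mix}(\bm{v}+\bdelta_v,\hat{\bm{z}})\|$ by $\kappa_1\|\bm{u}-\bm{v}\| + \kappa_1 \|\bdelta_u - \bdelta_v\|$ using the triangle inequality on $(\bm{u}+\bdelta_u)-(\bm{v}+\bdelta_v)$. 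Next, I would estimate $\|\bdelta_u - \bdelta_v\|$ by writing $\bdelta_{\bm{w}} = \rho\, \nabla_{\bm{w}} L^{Mix}(\bm{w},\hat{\bm{z}})/\|\nabla_{\bm{w}} L^{Mix}(\bm{w},\hat{\bm{z}})\|$ and invoking the standard inequality $\|\bm{a}/\|\bm{a}\| - \bm{b}/\|\bm{b}\|\| \le 2\|\bm{a}-\bm{b}\|/\min(\|\bm{a}\|,\|\bm{b}\|)$. Combined with $\kappa_1$-Lipschitzness of $\nabla_{\bm{w}} L^{Mix}$, this gives $\|\bdelta_u - \bdelta_v\| \le 2\rho\kappa_1\|\bm{u}-\bm{v}\|$ (up to the normalizing gradient floor). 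Summing the two pieces and doubling the leading term to absorb the normalization constant yields the claimed $2\kappa_1 + 2\kappa_1^2\rho$.

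For the $\bm{z}$-Lipschitz constant $2C$, I would start from the Lemma~1 bound $\|L^{Mix}(\bm{w}+\bdelta,\hat{\bm{z}}) - [\lambda L^{Mix}(\bm{w}+\bdelta,\bm{z}_i) + (1-\lambda) L^{Mix}(\bm{w}+\bdelta,\bm{z}_j)]\| \le C\|\bm{z}_i-\bm{z}_j\|^2$, which identifies $L^{Mix}$ as $C$-smooth in the mixed input direction. Differentiating this quadratic remainder in $\bm{z}$ (or, more rigorously, mimicking the chain-of-inequalities in the proof of Lemma~1 but applied to $\nabla_{\bm{z}} l$ rather than $l$, using $\kappa_2$-Lipschitzness of $\nabla_{\bm{z}} l$ from Assumption~\ref{Assump:1}) converts the quadratic bound into a linear one for the gradient, with the factor of $2$ arising from differentiating $\|\bm{p}-\bm{q}\|^2$. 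Since the perturbed weight $\bm{w}+\bdelta$ plays no role in the $\bm{z}$-argument manipulations, it can be carried as a fixed parameter throughout.

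The main obstacle is controlling $\|\bdelta_u - \bdelta_v\|$: because $\bdelta$ involves dividing by $\|\nabla_{\bm{w}} L^{Mix}\|$, the Lipschitz estimate degenerates when the gradient nearly vanishes, so one must either implicitly assume the gradient norm is bounded away from zero or absorb the reciprocal into the constant. Making this step rigorous, while still tracking every constant so that they combine to \emph{exactly} $2\kappa_1 + 2\kappa_1^2\rho$, is the delicate part; the $\bm{z}$-direction is comparatively routine once Lemma~1 is in hand.
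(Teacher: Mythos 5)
Your skeleton matches the paper's: bound the $\bm{w}$-variation by $\kappa_1$-Lipschitzness of $\nabla_{\bm{w}} L^{Mix}$ applied to $(\bm{u}+\bdelta_u)-(\bm{v}+\bdelta_v)$, control the difference of the two perturbations through the normalized-gradient formula for $\bdelta$, and delegate the $\bm{z}$-variation to Lemma~1. The genuine divergence is in where the factor of $2$ comes from, and this is where your accounting does not close. The paper does not prove the two directions separately: it bounds $\|\nabla L^{GMix}(\bm{u}+\bdelta,\bm{p})-\nabla L^{GMix}(\bm{v}+\bdelta,\bm{q})\|$ jointly, inserts the intermediate point $(\bm{v}+\bdelta,\bm{p})$, and then applies a single estimate of the form $\mathbb{E}\|x_1+x_2\|\le 2\,\mathbb{E}[\|x_1\|+\|x_2\|]$, so that one factor of $2$ multiplies both the $\bm{w}$-term (already bounded by $(\kappa_1+\kappa_1^2\rho)\|\bm{u}-\bm{v}\|$) and the $\bm{z}$-term (bounded via Lemma~1), yielding $2\kappa_1+2\kappa_1^2\rho$ and $2C$ simultaneously. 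In your version the $2$ is manufactured twice by unrelated mechanisms. For $\bm{w}$, the inequality $\bigl\|\bm{a}/\|\bm{a}\|-\bm{b}/\|\bm{b}\|\bigr\|\le 2\|\bm{a}-\bm{b}\|/\min(\|\bm{a}\|,\|\bm{b}\|)$ attaches the $2$ (and a gradient-norm floor) only to the $\kappa_1^2\rho$ term, leaving the leading term at $\kappa_1$ rather than $2\kappa_1$; the subsequent ``doubling the leading term to absorb the normalization constant'' is a patch, not a derivation, and the floor never disappears. For $\bm{z}$, attributing the $2$ to ``differentiating $\|\bm{p}-\bm{q}\|^2$'' cannot be made precise: Lemma~1 bounds function values, and passing from an $O(\|\bm{z}_i-\bm{z}_j\|^2)$ bound on the loss to a Lipschitz bound on $\nabla_{\bm{z}}L^{Mix}$ requires rerunning the Cauchy--Schwarz/$\kappa_2$ argument at the gradient level, which produces a constant tied to $\kappa_2$ rather than literally $2C$.

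To be fair, the paper is itself loose at exactly the two places you flag: it asserts that the normalized-gradient difference is bounded by $\kappa_1\|\bm{u}-\bm{v}\|$ without the floor you correctly worry about, and it dispatches the $\bm{z}$-term with ``easily draw the answer from Lemma~1.'' But if the goal is to reproduce the stated constants, the step you are missing is the initial telescoping over the joint pair $(\bm{w},\bm{z})$ and the single factor-of-$2$ bound applied to that two-term sum; with that in place, each half reduces to a one-line application of Assumption~1 and Lemma~1 respectively, and no ad hoc doubling is needed.
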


\begin{proof}
	We start from the two group of status towards the loss objective in $L^{GMix}$ with respect to different SAM models and Mixup data as $(\bm{u}+ \bdelta, \bm{p})$ and $(\bm{v} + \bdelta, \bm{q})$ that
	\begin{equation}
		\begin{split}
			& \mathbb{E} || \nabla L^{GMix} L(\bm{u}+ \bdelta, \bm{p}) - \nabla L^{GMix} (\bm{v} + \bdelta, \bm{q})||\\ 
			& = \mathbb{E} || \nabla L^{GMix} L(\bm{u}+ \bdelta, \bm{p}) -  \nabla L^{GMix} L(\bm{v}+ \bdelta, \bm{p})    \\
			& \quad + \nabla L^{GMix} L(\bm{v}+ \bdelta, \bm{p}) - \nabla L^{GMix} (\bm{v} + \bdelta, \bm{q})|| \\
			& \overset{(a)}{\leq} 2 \mathbb{E} [|| \nabla L^{GMix} L(\bm{u}+ \bdelta, \bm{p}) -  \nabla L^{GMix} L(\bm{v}+ \bdelta, \bm{p})|| \\
			& \quad + ||\nabla L^{GMix} L(\bm{v}+ \bdelta, \bm{p}) - \nabla L^{GMix} (\bm{v} + \bdelta, \bm{q})||] \\
			&  = 2 \underbrace{\| \nabla_{\bm{w}} L^{Mix} (\bm{u} + \bdelta, \bm{p}) - \nabla_{\bm{w}} L^{Mix} (\bm{v} + \bdelta, \bm{p}) \|}_{A} \\
			& \quad +  2 \underbrace{\|(\nabla_{\bm{z}} L^{Mix} (\bm{v} + \bdelta, \bm{p}) -   \nabla_{\bm{z}} L^{Mix} (\bm{v} + \bdelta, \bm{q}) ) \|}_{B}, \\
		\end{split}
	\end{equation}
	where $(a)$ is due to the fact that $\mathbb{E}[\|x_1 + \dots + x_n \|] \leq n \mathbb{E}[\|x_1\| + \dots + x_n]$. And for the term $A$, inspired by 
	\cite{zhao2022penalizing} we can have 
	\begin{equation}
		\begin{split}
			A & = \| \nabla_{\bm{w}} L^{Mix} (\bm{u} + \rho \frac{\nabla_{\bm{w}} L^{Mix} (\bm{u}, \bm{p})}{\| \nabla_{\bm{w}} L^{Mix} (\bm{u}, \bm{p}) \|}) \\
			& \quad -  \nabla_{\bm{w}} L^{Mix} (\bm{v} + \rho \frac{\nabla_{\bm{w}} L^{Mix} (\bm{v}, \bm{p})}{\| \nabla_{\bm{w}} L^{Mix} (\bm{v}, \bm{p}) \|}) \| \\
			& \overset{(b)}{\leq} \kappa_1 \| (\bm{u} + \rho \frac{\nabla_{\bm{w}} L^{Mix} (\bm{u}, \bm{p})}{\| \nabla_{\bm{w}} L^{Mix} (\bm{u}, \bm{p}) \|}) - (\bm{v} + \rho \frac{\nabla_{\bm{w}} L^{Mix} (\bm{v}, \bm{p})}{\| \nabla_{\bm{w}} L^{Mix} (\bm{v}, \bm{p}) \|}) \|  \\
			& \leq \kappa_1 \|\bm{u} - \bm{v} \| +\kappa_1 \rho \| \frac{\nabla_{\bm{w}} L^{Mix} (\bm{u}, \bm{p})}{\| \nabla_{\bm{w}} L^{Mix} (\bm{u}, \bm{p}) \|} - \frac{\nabla_{\bm{w}} L^{Mix} (\bm{v}, \bm{p})}{\| \nabla_{\bm{w}} L^{Mix} (\bm{v}, \bm{p}) \|} \|  \\
			& \leq (\kappa_1 + \kappa_{1}^{2} \rho) \| \bm{u} - \bm{v} \|,\\ \nonumber
		\end{split}
	\end{equation}
	and for the term $B$, we could easily draw the answer from the obtained Lemma.~1 above. Then the proofs are done. 
\end{proof}

\section{BG-Mix and DG-Mix Algorithms}\label{Sec:algorithm}
In this section, we propose two novel algorithms, called Binary G-Mix (BG-Mix) and Decomposed G-Mix (DG-Mix), which aim to address the ``manifold intrusion" problem in Mixup \cite{guo2019mixup} within the developed G-Mix learning framework.

The ``manifold intrusion" problem refers to conflicts between the synthetic labels of the generated samples and the original labels for a given training set $\mathcal{S}$. To mitigate this problem, we analyze it from the perspective of finding ``flat minima" in the SAM procedure within the G-Mix framework. The $\bdelta$ approximation procedure in the G-Mix framework introduces a quantified loss change between $L^{Mix}(\bm{w} + \bdelta, \hat{\bm{z}})$ and $L^{Mix}(\bm{w}, \bm{\hat{z}})$. Intuitively, we consider that training samples with larger loss changes due to weight perturbation are more sensitive to finding ``flat minima," resulting in lower probabilities of causing the "manifold intrusion" problem.

To address this issue, we first develop a sharpness-sensitive selection process for the quantified loss change $L^{Mix}(\bm{w} + \bdelta, \hat{\bm{z}}) - L^{Mix}(\bm{w}, \bm{\hat{z}})$ for a given set of training examples. Then, the BG-Mix and DG-Mix algorithms apply different methods to further enhance the performance of the G-Mix framework. The BG-Mix algorithm focuses on improving the performance by disregarding the less-sensitive examples during the learning process. By excluding the less-sensitive examples, BG-Mix addresses the "manifold intrusion" problem and improves the computational efficiency of the G-Mix framework.

On the other hand, the DG-Mix algorithm decomposes the loss of less-sensitive examples into two components: one component is parallel to the averaged direction of the sensitive examples, and the other component is orthogonal to that direction. DG-Mix performs an extra back-forward operation on the orthogonal component as a regularization term, further improving the generalization ability of DNNs. Note that both the BG-Mix and DG-Mix algorithms aim to enhance the performance of the G-Mix framework by effectively handling the ``manifold intrusion" problem. Their specific approaches are detailed as follows.

\subsection{Sharpness-sensitive Selection}
Considering a training batch $\mathcal{B} = \{\bm{\hat{z}}_i\}$ after the Mixup procedure in G-Mix, we can divide the data samples in $\mathcal{B}$ into two subsets by evaluating their quantified sharpness-sensitivity:
\begin{equation}\label{Eq:two_set}
	\begin{split}
		\mathcal{B}^{+}: &= \{ \bm{\hat{z}}_i \subseteq \mathcal{B} | l(\bm{w} + \bdelta, \bm{\hat{z}}_i) - l(\bm{w}, \bm{\hat{z}}_i) \geq \xi \} \\
		\mathcal{B}^{-}: &= \{ \bm{\hat{z}}_i \subseteq \mathcal{B} | l(\bm{w} + \bdelta, \bm{\hat{z}}_i) - l(\bm{w}, \bm{\hat{z}}_i)  < \xi \}, \\
	\end{split}
\end{equation}
where $\mathcal{B}^{+}$ could be considered more sharpness-sensitive than $\mathcal{B}^{-}$ and the added hyper-parameter $\xi$ is an empirical threshold that can be determined by changing the size ratio $\gamma$ where $\gamma = \frac{|\mathcal{B}^{+}|}{|\mathcal{B}|}$. Additionally, we further provide the theoretical analysis of the generalization ability relationship between $\mathcal{B}^{+}$ and $\mathcal{B}$ as the following theorem.

\begin{theorem}
	\emph{(Upper-bounded Loss \cite{du2022efficient}.)} For a fixed value of hyper-parameter $\gamma \in [0,1]$, the loss of $L^{GMix}_{\mathcal{B}} (\bm{w}+ \bdelta, \bm{\hat{z}}) $ can be regarded as upper-bounded by $L^{GMix}_{\mathcal{B}^{+}} (\bm{w}+ \bdelta, \bm{\hat{z}})$. 
\end{theorem}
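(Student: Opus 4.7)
The plan is to exploit the fact that $\mathcal{B}$ is partitioned into $\mathcal{B}^{+}$ and $\mathcal{B}^{-}$ by the sharpness-sensitivity threshold in Eq.~\eqref{Eq:two_set}, so the $L^{GMix}$ average over $\mathcal{B}$ can be decomposed as a convex combination of the averages over the two subsets. Concretely, if $|\mathcal{B}^{+}| = \gamma|\mathcal{B}|$, then
\begin{equation*}
L^{GMix}_{\mathcal{B}}(\bm{w}+\bdelta,\bm{\hat z})
= \gamma\, L^{GMix}_{\mathcal{B}^{+}}(\bm{w}+\bdelta,\bm{\hat z})
+ (1-\gamma)\, L^{GMix}_{\mathcal{B}^{-}}(\bm{w}+\bdelta,\bm{\hat z}).
\end{equation*}
Thus the theorem reduces to showing $L^{GMix}_{\mathcal{B}^{-}}(\bm{w}+\bdelta,\bm{\hat z}) \le L^{GMix}_{\mathcal{B}^{+}}(\bm{w}+\bdelta,\bm{\hat z})$, after which a convexity-style argument (replacing the smaller term by the larger) yields the desired upper bound.

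The order of steps I would follow is: (i) write the two $L^{GMix}$ quantities as sums over their respective index sets and split the $\mathcal{B}$ sum along the partition; (ii) invoke the defining inequality of $\mathcal{B}^{+}$ in Eq.~\eqref{Eq:two_set}, namely $l(\bm{w}+\bdelta,\bm{\hat z}_i)-l(\bm{w},\bm{\hat z}_i)\ge \xi$ on $\mathcal{B}^{+}$ and the reverse strict inequality on $\mathcal{B}^{-}$; (iii) rewrite each perturbed loss as the baseline loss plus the sensitivity, $l(\bm{w}+\bdelta,\bm{\hat z}_i) = l(\bm{w},\bm{\hat z}_i) + [l(\bm{w}+\bdelta,\bm{\hat z}_i)-l(\bm{w},\bm{\hat z}_i)]$, so that the comparison between $\mathcal{B}^{+}$ and $\mathcal{B}^{-}$ reduces to comparing their mean sensitivities; (iv) apply the threshold $\xi$ to conclude that the mean perturbed loss on $\mathcal{B}^{+}$ dominates that on $\mathcal{B}^{-}$ (treating the unperturbed baseline contributions as approximately balanced across the batch, in line with the ``can be regarded as'' phrasing in the statement, which I expect to mirror the argument in~\cite{du2022efficient}); and (v) substitute this dominance back into the convex combination to obtain $L^{GMix}_{\mathcal{B}}\le L^{GMix}_{\mathcal{B}^{+}}$.

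The main obstacle is step (iv): the definition of $\mathcal{B}^{+}$ selects examples by the loss \emph{change} under perturbation, not by the perturbed loss itself, so a strict pointwise ordering between the perturbed losses of the two subsets does not follow without an extra ingredient. I anticipate resolving this by assuming—either explicitly or implicitly, as in the referenced efficient-SAM analysis—that the unperturbed losses $l(\bm{w},\bm{\hat z}_i)$ are approximately comparable across $\mathcal{B}$ (a reasonable stipulation within a single mini-batch), so that ranking by $l(\bm{w}+\bdelta,\bm{\hat z}_i)-l(\bm{w},\bm{\hat z}_i)$ is equivalent up to a constant shift to ranking by $l(\bm{w}+\bdelta,\bm{\hat z}_i)$. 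Under this stipulation, the top-$\gamma$ sharpness-sensitive examples are also the top-$\gamma$ examples by perturbed loss, and a standard ``top-$k$ average dominates overall average'' argument closes the proof cleanly.
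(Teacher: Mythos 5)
Your proposal follows essentially the same route as the paper: the same convex-combination decomposition $L^{GMix}_{\mathcal{B}} = \gamma L^{GMix}_{\mathcal{B}^{+}} + (1-\gamma)L^{GMix}_{\mathcal{B}^{-}}$, the same reduction to showing that the $\mathcal{B}^{-}$ average is dominated by the $\mathcal{B}^{+}$ average, and the same split of each perturbed loss into its unperturbed baseline plus its sharpness-sensitivity, with the sensitivity part ordered by construction from Eq.~\eqref{Eq:two_set}. You also correctly isolate the one genuinely non-rigorous step: selection by loss \emph{change} does not by itself order the perturbed losses.

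The only substantive difference is how that last step is closed. You stipulate that the unperturbed losses $l(\bm{w},\bm{\hat{z}}_i)$ are approximately balanced across the mini-batch, so that ranking by sensitivity coincides (up to a constant shift) with ranking by perturbed loss. The paper instead argues, via a first-order Taylor expansion, that the sensitivity is positively correlated with $\nabla_{\bm{w}} l(\bm{w},\bm{\hat{z}}_i)$, and then invokes the empirical observation from \cite{li2019gradient,du2022efficient} that higher-loss examples tend to have larger gradient magnitudes — i.e., it claims the baseline losses are themselves (probabilistically) sorted in the same order as the sensitivities, so both summands reinforce the desired inequality $L^{Mix}_{\mathcal{B}^{-}} \leq L^{Mix}_{\mathcal{B}^{+}}$. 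Both closures are heuristic rather than rigorous, and both are consistent with the theorem's ``can be regarded as'' phrasing; the paper's version has the minor advantage of not requiring the baselines to be nearly equal, only positively correlated with the sensitivities. Your proposal is acceptable as a faithful reconstruction of the argument.
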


\begin{proof}
	We start by divide the $L^{GMix}_{\mathcal{B}} (\bm{w}+ \bdelta, \bm{\hat{z}}) $ into two components towards the obtained sharpness-sensitivity as 
	\begin{equation}\label{Eq:theorem_2_1}
		\resizebox{0.99\columnwidth}{!}{
			$
			\begin{split}
				& L^{GMix}_{\mathcal{B}} (\bm{w}+ \bdelta, \bm{\hat{z}}) =  \gamma L^{GMix}_{\mathcal{B}^{+}} (\bm{w}+ \bdelta, \bm{\hat{z}}) + (1- \gamma)L^{GMix}_{\mathcal{B}^{-}} (\bm{w}+ \bdelta, \bm{\hat{z}}) \\
				&=   L^{GMix}_{\mathcal{B}^{+}} (\bm{w}+ \bdelta, \bm{\hat{z}}) + (1- \gamma) (L^{GMix}_{\mathcal{B}^{-}} (\bm{w}+ \bdelta, \bm{\hat{z}}) - L^{GMix}_{\mathcal{B}^{+}} (\bm{w}+ \bdelta, \bm{\hat{z}})) \\
				& \overset{(a)}{\leq} L^{GMix}_{\mathcal{B}^{+}} (\bm{w}+ \bdelta, \bm{\hat{z}}) + (1 - \gamma)[R^{GMix}_{\mathcal{B}^{-}} (\bm{w}, \bm{\hat{z}}) + L^{Mix}_{\mathcal{B}^{-}}(\bm{w} + \bdelta,\bm{\hat{z}}) \\& \quad - R^{GMix}_{\mathcal{B}^{+}} (\bm{w}, \bm{\hat{z}}) + L^{Mix}_{\mathcal{B}^{+}}(\bm{w} + \bdelta,\bm{\hat{z}} )], \nonumber		
			\end{split}
			$}
	\end{equation}
	where $(a)$ can be deviated from the definition in Eq.~\eqref{Eq:loss_upperbound} that $R^{GMix} (\bm{w}, \bm{\hat{z}}) = \max_{\bdelta: \|\bdelta \|_2 \leq \rho} L^{Mix} (\bm{w}+\bdelta, \bm{\hat{z}} ) - L^{Mix} (\bm{w}, \bm{\hat{z}})$. For the given training batch $\mathcal{B}$, we can further expand $R^{GMix}$
	\begin{equation}\label{Eq:heorem_2_r}
		R^{GMix}_{\mathcal{B}}(\bm{w}, \bm{\hat{z}}) = \frac{1}{|\mathcal{B}|} \sum_{i}^{|\mathcal{B}|} \mathbb{E} [l( \bm{w}+ \bdelta, \bm{\hat{z}}_{i})- l( \bm{w}, \bm{\hat{z}}_{i})].
	\end{equation}
	Thus, with a sorted $l(\bm{w} + \bdelta, \bm{\hat{z}}_i) - l(\bm{w}, \bm{\hat{z}}_i)$ as illustrated in Eq.~\eqref{Eq:two_set}, we will have $	R^{GMix}_{\mathcal{B}^{-}}(\bm{w}, \bm{\hat{z}}) \leq R^{GMix}_{\mathcal{B}^{+}}(\bm{w}, \bm{\hat{z}})$. Additionally, we can also apply the Taylor approximation on $l( \bm{w}+ \bdelta, \bm{\hat{z}}_{i})$ from the $L^{Mix}_{\mathcal{B}} (\bm{w} + \bdelta,\bm{\hat{z}}) = \frac{1}{|\mathcal{B}|} \sum_{i}^{|\mathcal{B}|} \mathbb{E} [l( \bm{w}+ \bdelta, \bm{\hat{z}}_{i})]$ around $\bm{w}$ as 
	\begin{equation}\label{Eq:heorem_2_taylor}
		l( \bm{w}+ \bdelta, \bm{\hat{z}}_{i}) = l( \bm{w}, \bm{\hat{z}}_{i}) + \|\bdelta \| \nabla_{\bm{w}} l( \bm{w}, \bm{\hat{z}}_{i}) + \mathcal{O}(\|\bdelta \|),
	\end{equation} 
	where $\mathcal{O}(\|\bdelta \|)$ represents higher terms in the Taylor expansion that can be omitted here in our work. Then, Eq.~\eqref{Eq:heorem_2_taylor} indicates that the result of $l( \bm{w}+ \bdelta, \bm{\hat{z}}_{i}) - l( \bm{w}, \bm{\hat{z}}_{i})$ is positively correlated to $\nabla_{\bm{w}} l( \bm{w}, \bm{\hat{z}}_{i})$. As demonstrated in \cite{li2019gradient,du2022efficient}, the difficult training samples in DL (e.g., the examples with higher loss) usually generate gradients with larger magnitudes. Therefore, it is reasonable to assume that with a sorted $l( \bm{w}+ \bdelta, \bm{\hat{z}}_{i}) - l( \bm{w}, \bm{\hat{z}}_{i})$ as demonstrated in Eq.~\eqref{Eq:two_set}, the corresponding $l( \bm{w}, \bm{\hat{z}}_{i})$ is with high possibility of being sorted, which leads to the claim that 
	\begin{equation}\label{Eq:subset_l}
		L^{Mix}_{\mathcal{B}^{-}}(\bm{w} + \bdelta,\bm{\hat{z}} ) - L^{Mix}_{\mathcal{B}^{+}}(\bm{w} + \bdelta,\bm{\hat{z}} )  \leq 0.
	\end{equation}
	By adding back the results to Eq.~\eqref{Eq:theorem_2_1}, the proof done.
\end{proof}

\subsection{Binary G-Mix and Decomposed G-Mix}
Then, we apply two different approaches to further improve the performance of DNNs on G-Mix with the obtained two subsets $\mathcal{B}^{+}$ and $\mathcal{B}^{-}$ for the sharpness-sensitivity. Firstly, one natural solution is to simply ignore the training examples in $\mathcal{B}^{-}$, which can both reduce the chances of ``manifold intrusion" and increase the efficiency of G-Mix. Under this condition, the training function for the given batch $\mathcal{B}$ of G-Mix would turn to 
\begin{equation}\label{Eq:bgmix}
	L^{BG-Mix} (\bm{w},\bm{\hat{z}}) =   \frac{1}{|\mathcal{B}^{+}|} \sum_{i \in \mathcal{B}^{+}} l( \bm{w} + \bdelta, \bm{\hat{z}_{i}}), 
\end{equation}
where $L^{BG-Mix}=  L^{GMix}_{\mathcal{B}^{+}}$, and we call this approach as $\underline{B}$inary G-Mix. Secondly, we consider that although the training examples in $\mathcal{B}^{-}$ can be less important, they may still capture useful training information that can improve the learning performance of the DNN model. As such, by denoting the averaged loss $\bar{\bm{l}} = \frac{1}{|\mathcal{B}^{+}|} \sum_{i \in \mathcal{B}^{+}} l( \bm{w} + \bdelta, \bm{\hat{z}_{i}})$, we decompose the training loss of $l( \bm{w} + \bdelta, \bm{\hat{z}_{i}})$ in $\mathcal{B}^{-}$ into two components that are receptively parallel and orthogonal to the direction of $\bar{\bm{l}}$ that could be formalized as 
\begin{equation}\label{Eq:dgmix_decompose}
	l( \bm{w} + \bdelta, \bm{\hat{z}_{i,j}}) = l_{\parallel}( \bm{w} + \bdelta, \bm{\hat{z}_{i}}) + l_{\perp}( \bm{w} + \bdelta, \bm{\hat{z}_{i}}), \text{for $i \in \mathcal{B}^{-}$}.
\end{equation}
Then, by adding the orthogonal part $l_{\perp}( \bm{w} + \bdelta, \bm{\hat{z}_{i}})$ to the loss objective, we can both search the ``flat minima" in the SAM procedure of G-Mix via minimizing $\mathcal{B}^{+}$ and protect the potentially useful information in $\mathcal{B}^{-}$ at the same time, which is called the $\underline{D}$ecomposed G-Mix approach 
\begin{equation}\label{Eq:dgmix}
	L^{DG-Mix} (\bm{w},\bm{\hat{z}}) =   L^{BG-Mix}(\bm{w},\bm{\hat{z}}) + \frac{1}{|\mathcal{B}^{-}|} \sum_{i \in \mathcal{B}^{-}} l_{\perp} ( \bm{w} + \bdelta, \bm{\hat{z}_{i}}). \nonumber
\end{equation}
We illustrate the proposed BG- and DG-Mix algorithms at the Algorithm.~\ref{alg:algorithms}. Note that for better comparison and presentation, we highlight the major difference between the developed G-Mix framework, the proposed BG-Mix, and the proposed DG-Mix algorithms with different colors in Algorithm.~\ref{alg:algorithms}. 

\begin{algorithm}[tb]
	\caption{\colorbox[rgb]{0.74,0.83,1}{G-Mix}, \colorbox[rgb]{0.5,0.9,0.5}{BG-Mix} and \colorbox[rgb]{1.0, 0.55, 0.41}{DG-Mix}}
	\label{alg:algorithms}
	\begin{algorithmic}[1]
		\STATE {\bfseries Input:}  Batch $\mathcal{B} = \{\bm{\hat{z}}_i \}$, $\gamma$ (the threshold controller).
		\STATE $\underline{\text{Sharpness-sensitive selection:}}$
		\STATE {With $\gamma = \frac{|\mathcal{B}^{+}|}{|\mathcal{B}|}$, the empirical threshold $\xi$ is determined}
		\STATE {Obtain two subsets $\mathcal{B}^{+}$ and $\mathcal{B}^{-}$ by sorting $l(\bm{w} + \bdelta, \bm{\hat{z}}_i) - l(\bm{w}, \bm{\hat{z}}_i)$ towards the threshold $\xi$ that \begin{itemize}
				\item[] {$\mathcal{B}^{+}: = \{ \bm{\hat{z}}_i \subseteq \mathcal{B} | l(\bm{w} + \bdelta, \bm{\hat{z}}_i) - l(\bm{w}, \bm{\hat{z}}_i) \geq \xi \}$.}
				\item[] {$\mathcal{B}^{-}: = \{ \bm{\hat{z}}_i \subseteq \mathcal{B} | l(\bm{w} + \bdelta, \bm{\hat{z}}_i) - l(\bm{w}, \bm{\hat{z}}_i)  < \xi \}$.}
		\end{itemize}} 
		\STATE $\underline{\text{Loss Objectives:}}$
		\begin{itemize}
			\item[] \colorbox[rgb]{0.74,0.83,1}{$L^{G-Mix}  :   \frac{1}{|\mathcal{B}|} \sum_{i \in \mathcal{B}} l( \bm{w} + \bdelta, \bm{\hat{z}_{i}})$.}
			\item[] \colorbox[rgb]{0.5,0.9,0.5}{$L^{BG-Mix}  :   \frac{1}{|\mathcal{B}^{+}|} \sum_{i \in \mathcal{B}^{+}} l( \bm{w} + \bdelta, \bm{\hat{z}_{i}})$.}
			\item[] \colorbox[rgb]{1.0, 0.55, 0.41}{$L^{DG-Mix} :   L^{BG-Mix} +  \frac{1}{|\mathcal{B}^{-}|} \sum_{i \in \mathcal{B}^{-}} l_{\perp} ( \bm{w} + \bdelta, \bm{\hat{z}_{i}})$.}
		\end{itemize}
	\end{algorithmic}
\end{algorithm}

\subsection{Discussion}
In this paper, we make several contributions to improving the performance and generalization ability of DNN models. Firstly, we introduce the G-Mix learning framework, which combines the advantages of Mixup and SAM methods. Theoretical analysis in Section \ref{Subsec:theo} demonstrates that the G-Mix framework achieves Lipschitz continuity to both the weight parameters $\bm{w}$ and the input data $\bm{z}$. This analysis provides a solid foundation for understanding how the G-Mix framework improves generalization ability.

To further enhance the performance within the G-Mix framework, we propose two novel algorithms, namely BG-Mix and DG-Mix. These algorithms employ a sharpness-sensitive selection strategy that divides the training batch after the Mixup procedure into two subsets, denoted as $\mathcal{B}^{+}$ and $\mathcal{B}^{-}$. The BG-Mix algorithm discards the less-sensitive training examples in $\mathcal{B}^{-}$, which improves the efficiency of G-Mix and mitigates the ``manifold intrusion" risk associated with Mixup. On the other hand, the DG-Mix algorithm leverages the training information from $\mathcal{B}^{-}$ by decomposing the corresponding loss into parallel and orthogonal components relative to the averaged loss direction in $\mathcal{B}^{+}$. By incorporating the orthogonal component into the loss objective, the DG-Mix algorithm achieves better generalization than simply minimizing $\mathcal{B}^{+}$, while still reaching the same convergence as G-Mix.

It is worth noting that there exists a similar proposal called Efficient Sharpness-Aware Minimization (ESAM) \cite{du2022efficient}, which improves the efficiency of SAM by determining the most sharpness-sensitive examples. However, the key difference between BG-Mix and ESAM lies in their design motivations. BG-Mix specifically addresses the ``manifold intrusion" problem resulting from the Mixup procedure in G-Mix, whereas ESAM serves a different purpose. Additionally, while the DG-Mix algorithm incurs extra computation due to the loss decomposition process, this additional cost is negligible compared to the doubled back-propagation operations involved in SAM implementation.

Furthermore, in Section \ref{Sec:analysis}, we provide a detailed discussion on the computation cost of the compared approaches in this paper, addressing any concerns regarding the additional computational overhead introduced by the proposed algorithms. Overall, our contributions lie in the development of the G-Mix learning framework and the introduction of the BG-Mix and DG-Mix algorithms, which effectively improve the performance and generalization ability of DNN models while addressing specific challenges such as the ``manifold intrusion" problem.

\section{Experiments and Result Analysis}\label{Sec:analysis}

\begin{figure*}[tb]
	\centering
	\begin{subfigure}{0.19\columnwidth}
		\includegraphics[width = 1\columnwidth]{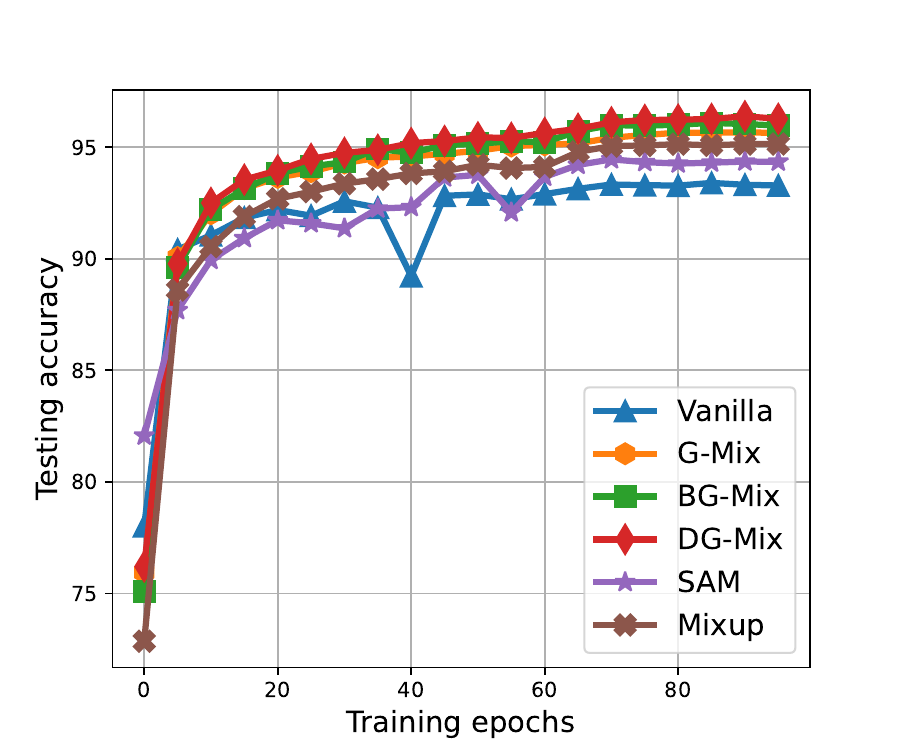}
		\caption{SVHN}
		\label{fig:convergence_svhn}
	\end{subfigure}
	\begin{subfigure}{0.19\columnwidth}
		\includegraphics[width = 1\columnwidth]{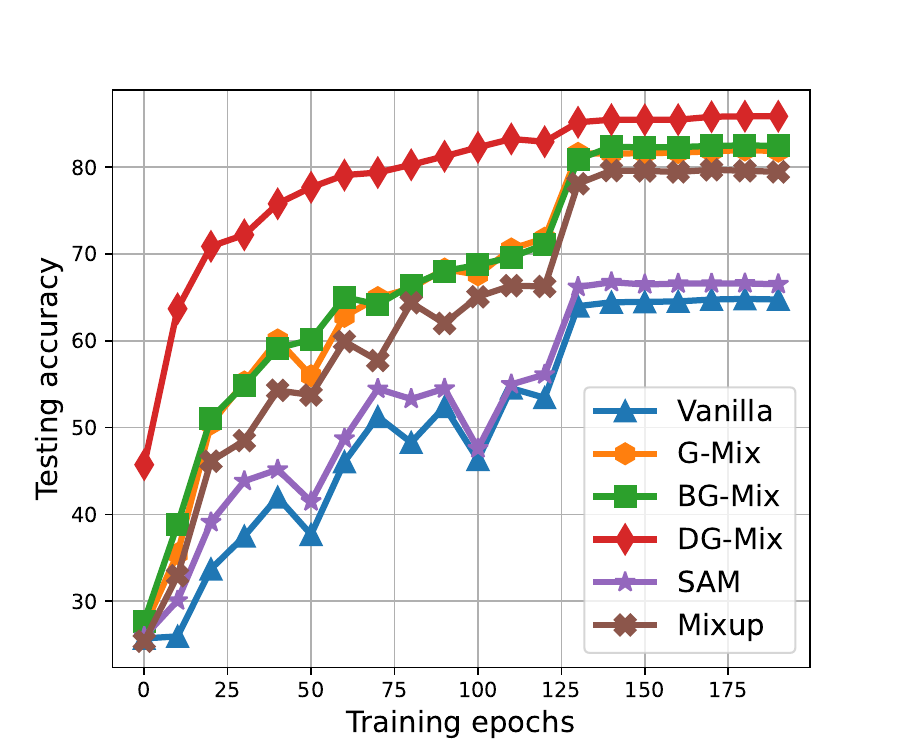}
		\caption{STL-10}
		\label{fig:convergence_cifar10}
	\end{subfigure}
	\begin{subfigure}{0.19\columnwidth}
		\includegraphics[width = 1\columnwidth]{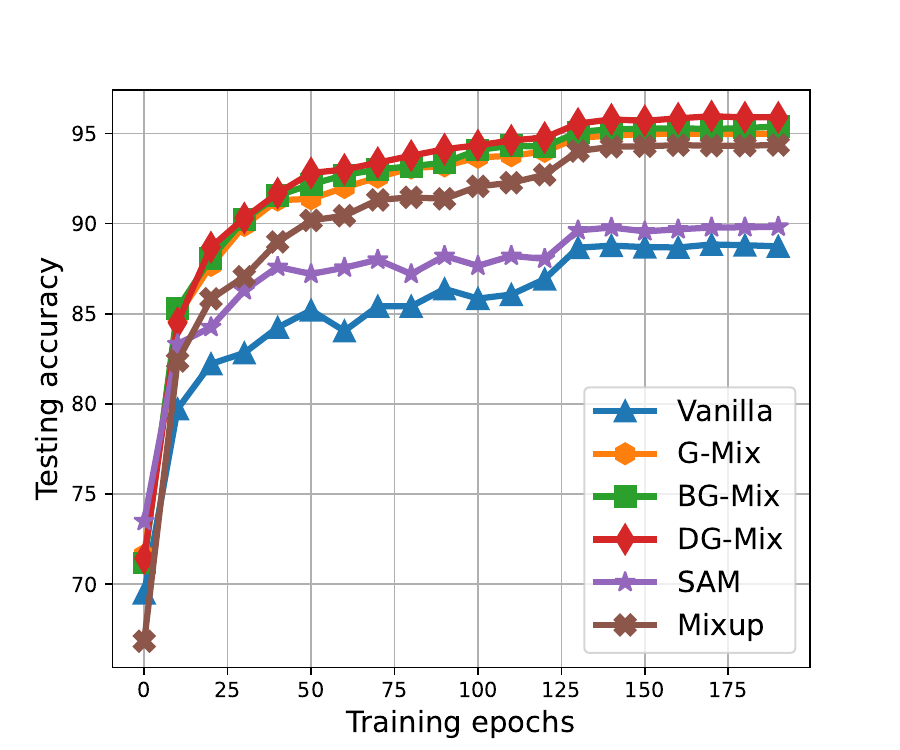}
		\caption{Cifar-10}
		\label{fig:convergence_stl10}
	\end{subfigure}
	\begin{subfigure}{0.19\columnwidth}
		\includegraphics[width = 1\columnwidth]{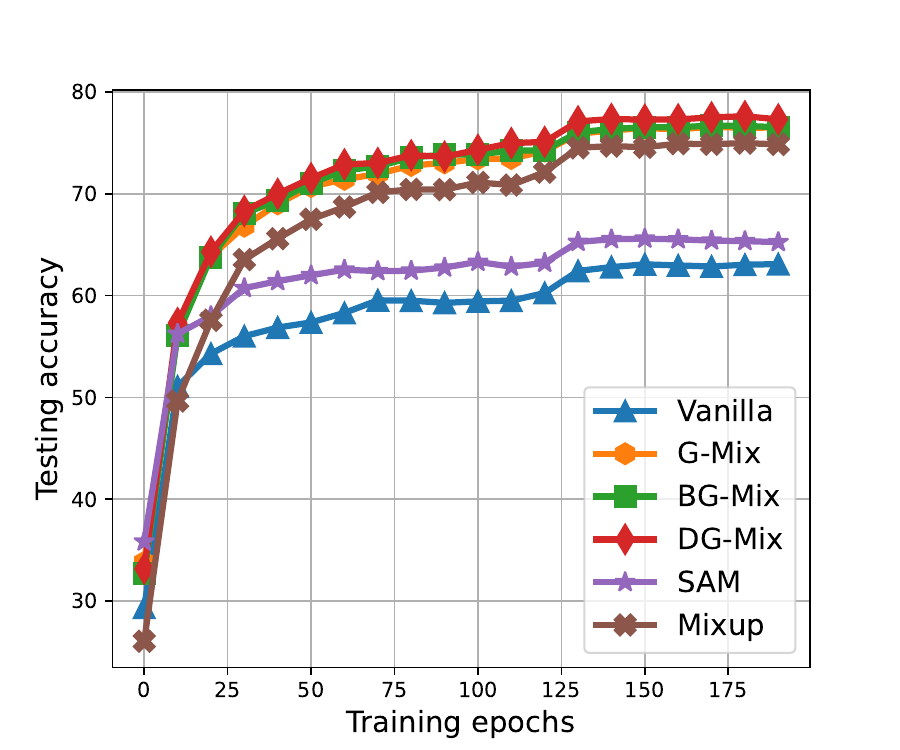}
		\caption{Cifar-100}
		\label{fig:convergence_cifar100}
	\end{subfigure}
	\begin{subfigure}{0.19\columnwidth}
		\includegraphics[width = 1\columnwidth]{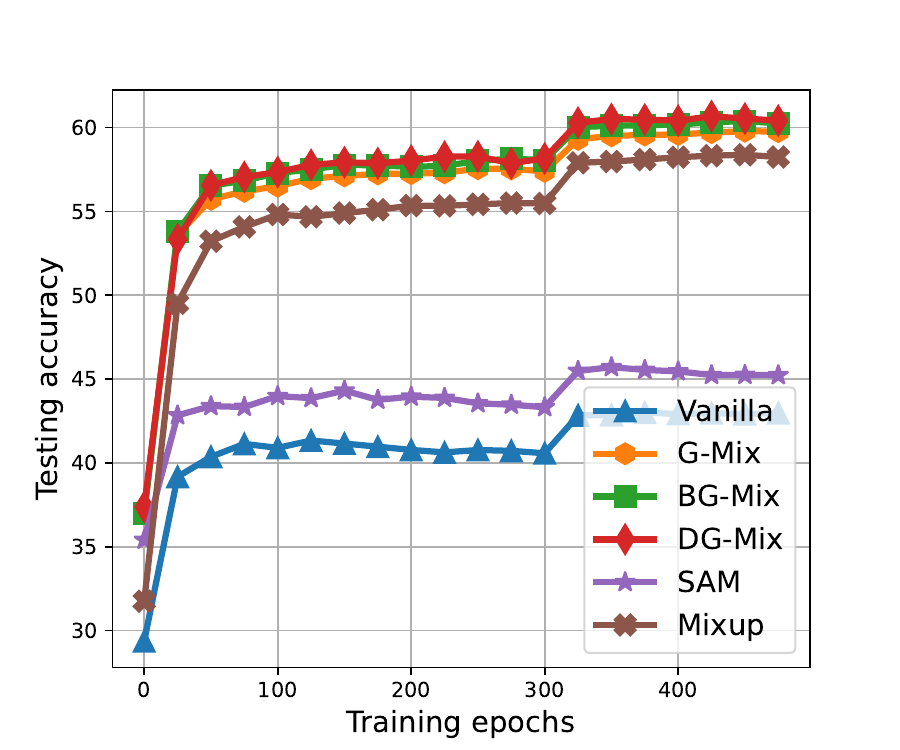}
		\caption{Tiny-ImageNet}
		\label{fig:convergence_tinyimagenet}
	\end{subfigure}
	\caption{Learning convergence of testing accuracy for the compared methods against multiple datasets.}
	\label{fig:convergence}
\end{figure*}

\begin{figure*}[tb]
	\centering
	\begin{subfigure}{0.19\columnwidth}
		\includegraphics[width = 1\columnwidth]{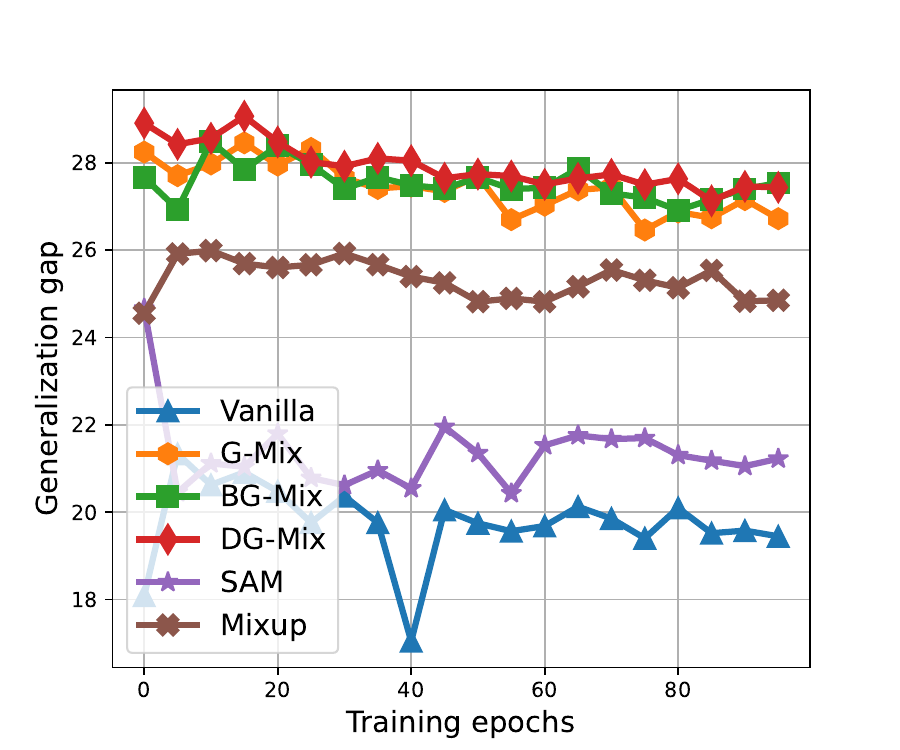}
		\caption{SVHN}
		\label{fig:gap_svhn}
	\end{subfigure}
	\begin{subfigure}{0.19\columnwidth}
		\includegraphics[width = 1\columnwidth]{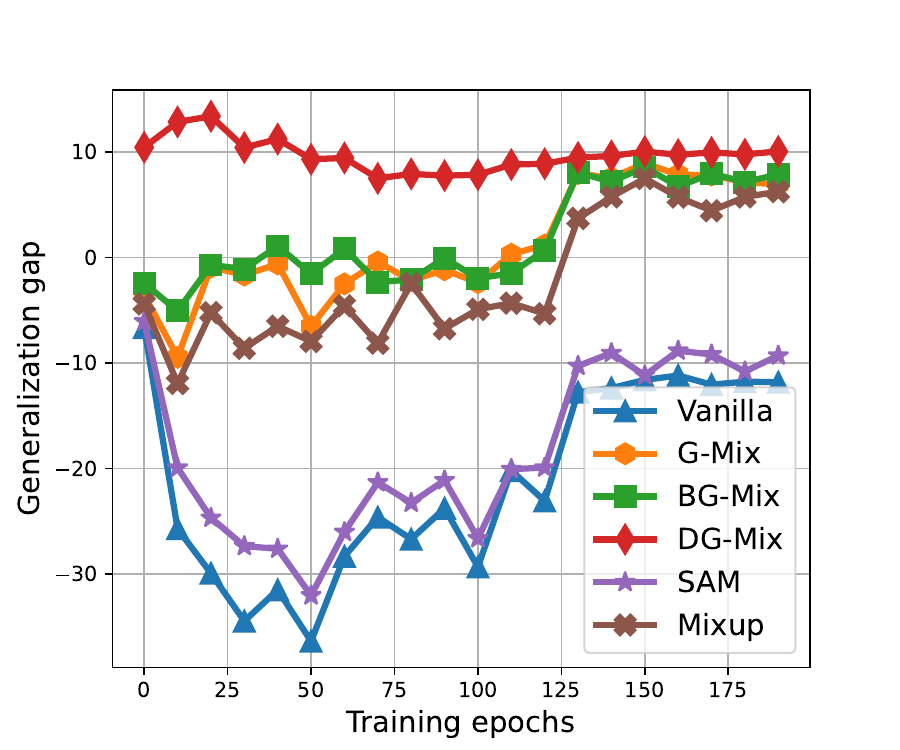}
		\caption{STL-10}
		\label{fig:gap_cifar10}
	\end{subfigure}
	\begin{subfigure}{0.19\columnwidth}
		\includegraphics[width = 1\columnwidth]{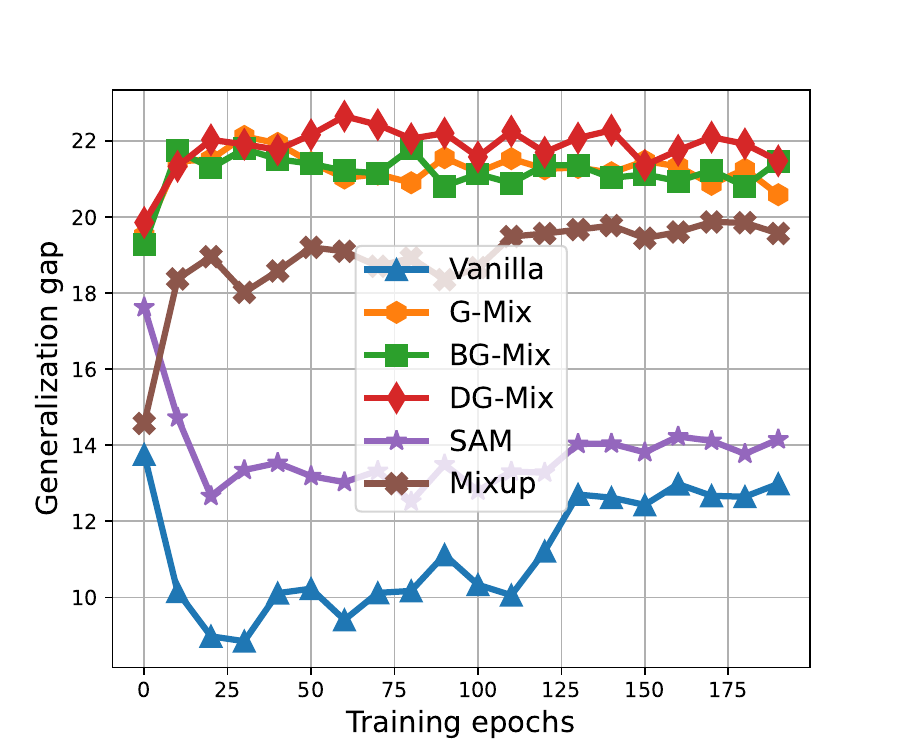}
		\caption{Cifar-10}
		\label{fig:gap_stl10}
	\end{subfigure}
	\begin{subfigure}{0.19\columnwidth}
		\includegraphics[width = 1\columnwidth]{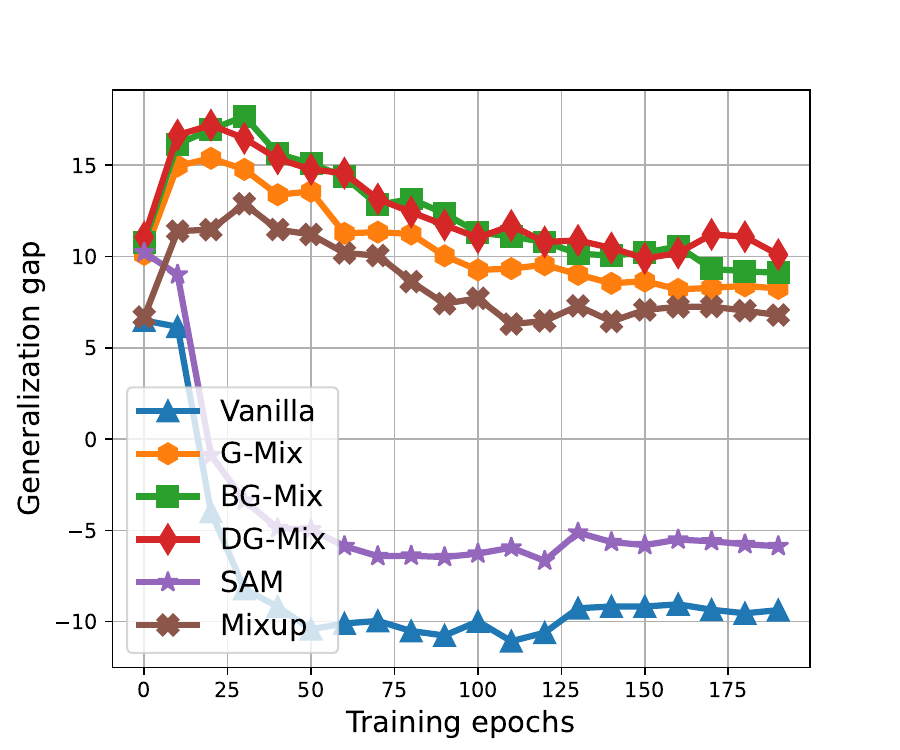}
		\caption{Cifar-100}
		\label{fig:gap_cifar100}
	\end{subfigure}
	\begin{subfigure}{0.19\columnwidth}
		\includegraphics[width = 1\columnwidth]{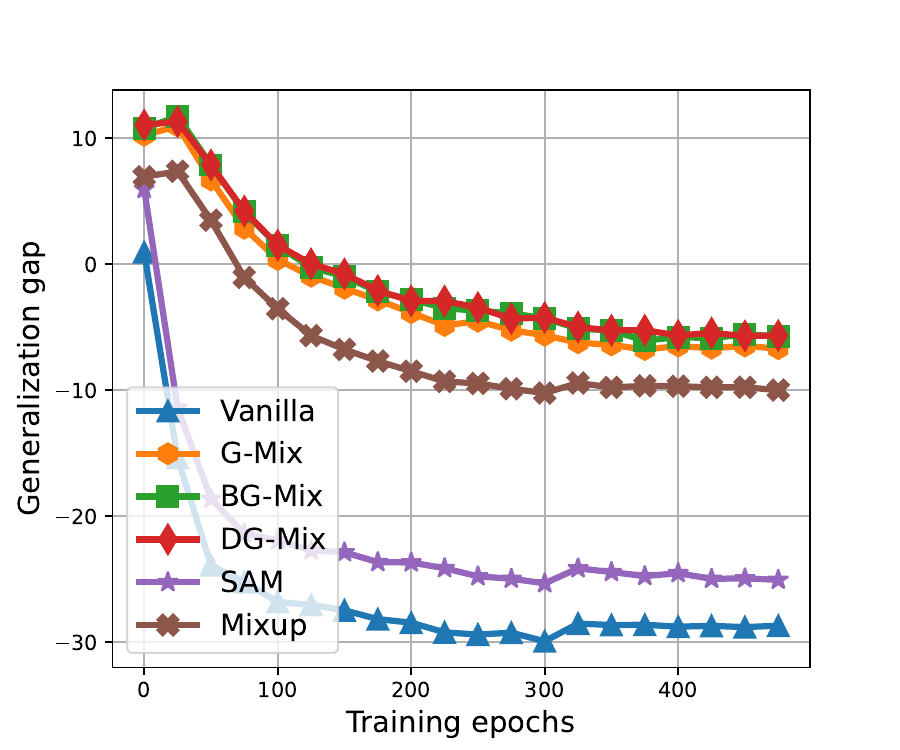}
		\caption{Tiny-ImageNet}
		\label{fig:gap_tinyimagenet}
	\end{subfigure}
	\caption{Learning convergence of generalization gap for the compared methods against multiple datasets.}
	\label{fig:gap}
\end{figure*}

\begin{figure*}[tb]
	\centering
	\begin{subfigure}{0.24\columnwidth}
		\includegraphics[width = 1\columnwidth]{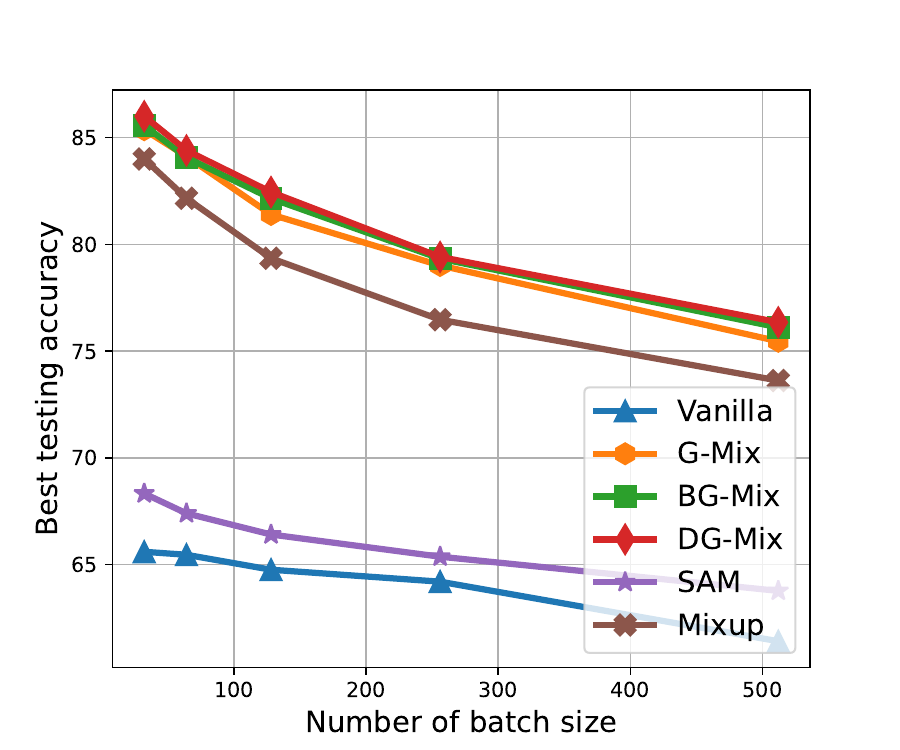}
		\caption{STL-10: ${\bf{Test}\textsubscript{Acc}}$}
		\label{fig:batch_acc_stl10}
	\end{subfigure}
	\begin{subfigure}{0.24\columnwidth}
		\includegraphics[width = 1\columnwidth]{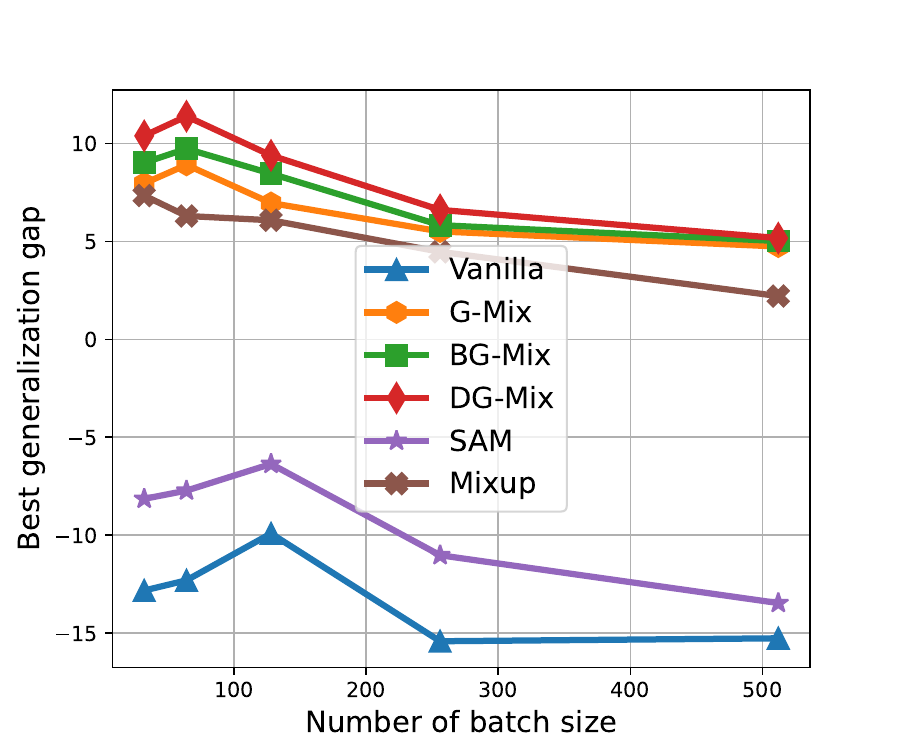}
		\caption{STL-10: ${\bf{Gap}}$}
		\label{fig:batch_gap_stl10}
	\end{subfigure}
	\begin{subfigure}{0.24\columnwidth}
		\includegraphics[width = 1\columnwidth]{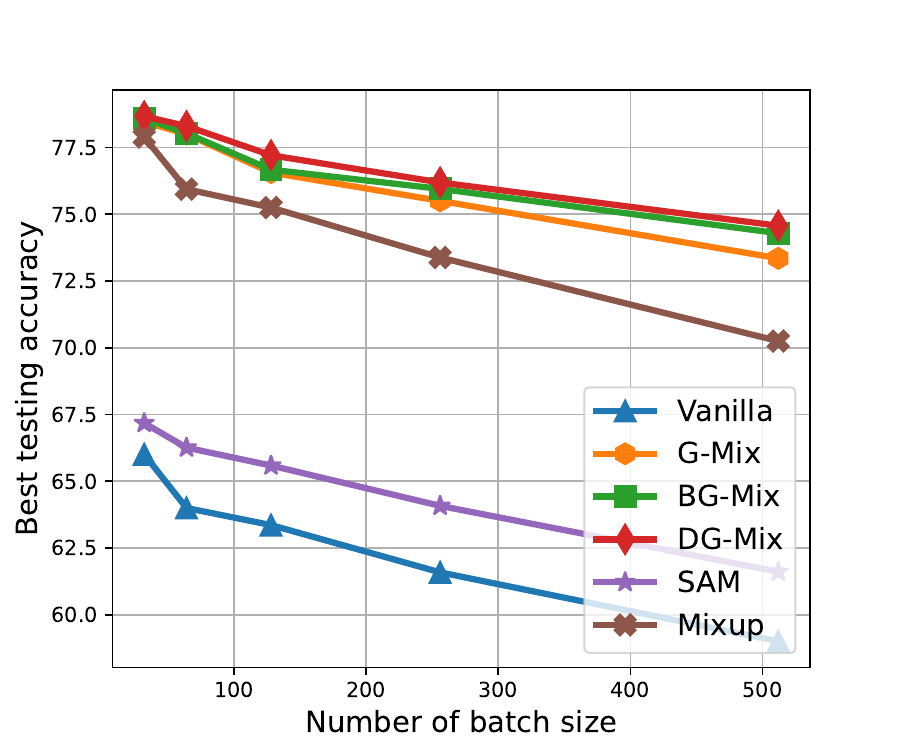}
		\caption{Cifar-100: ${\bf{Test}\textsubscript{Acc}}$}
		\label{fig:batch_acc_cifar100}
	\end{subfigure}
	\begin{subfigure}{0.24\columnwidth}
		\includegraphics[width = 1\columnwidth]{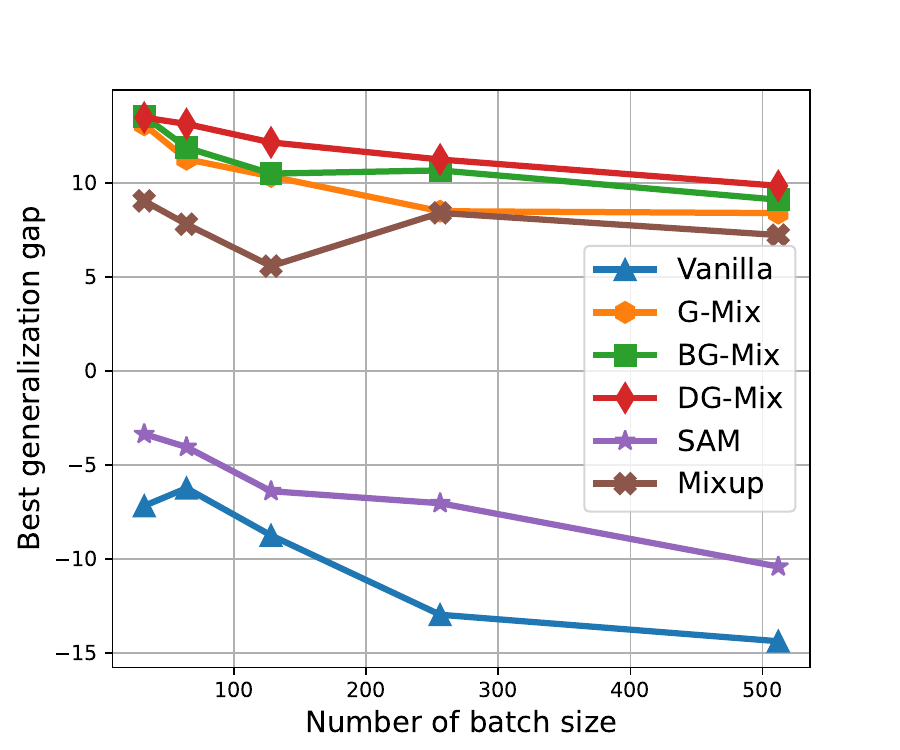}
		\caption{Cifar-100: ${\bf{Gap}}$}
		\label{fig:batch_gap_cifar100}
	\end{subfigure}
	\caption{Performance of the compared methods with the WRN architecture against STL-10 and Cifar-100 datasets.}
	\label{fig:batch}
\end{figure*}

\subsection{Datasets}
We have conducted experiments to evaluate the performance of the proposed learning methods on the following four datasets:
\begin{itemize}
	\item[1] \textbf{SVHN.} The Street View House Numbers (SVHN) \cite{netzer2011reading} is a real-world image dataset for classification and object recognition tasks. It contains $10$ classes of $73,257$ training and $26,032$ testing examples, where each example is formatted as $32 \times 32$ pixels. 
	\item[2] \textbf{Cifar-$\{10,100\}$.} Cifar-10 and Cifar-100 \cite{krizhevsky2009learning} datasets respectively have $10$ and $100$ classes of real-world images, where each image has $32 \times 32$ pixels. Specifically, the Cifar-10 dataset has $50,000$ in training and $10,000$ in testing images, with $6,000$ images per class. And each class in Cifar-100 dataset has $600$ images, which can be divided into $500$ training and $100$ testing examples. 
	\item[3] \textbf{STL-10.} The STL-10 \cite{coates2011analysis} is an image recognition dataset for developing supervised learning algorithms, which is inspired by the original Cifar-10 dataset but with some additional modifications. Particularly, the STL-10 dataset contains $10$ classes: airplane, bird, car, cat, deer, dog, horse, monkey, ship, and truck. Specifically, each class is with $500$ training and $800$ testing images, and each image has $96 \times 96$ color pixels, which makes STL-10 more challenging than Cifar-10.
	\item[4] \textbf{Tiny-ImageNet.} The ImageNet 2012 dataset \cite{krizhevsky2012imagenet} is a challenging image classification benchmark in the DL research field, which contains $1.2$ million training images and $50,000$ validation images from $1,000$ classes. In this paper, the introduced Tiny-ImageNet \cite{krizhevsky2012imagenet} is a subset of the original ImageNet 2012 \cite{krizhevsky2012imagenet} dataset, which has $200$ classes, where each class contains $500$ training, $50$ validation, and $50$ testing images that each image is of $64 \times 64$ pixels. 
\end{itemize}

\subsection{Experimental Setup}
Multiple DNN models have been trained using the proposed BG-Mix and DG-Mix algorithms under the developed G-Mix framework. Note that all the experiments are performed on the PyTorch \cite{paszke2017automatic} platform with one NVIDIA GTX 1080 Ti GPU card. 

\noindent\textbf{Models.}
For the SVHN dataset, we follow the settings in \cite{howard2017mobilenets} to train a MobileNet, which is considered to be an efficient model for mobile and embedded image classification applications that contain around $0.4$ million parameters. For Cifar-10, Cifar-100, STL-10, and Tiny-ImageNet datasets, we use the WideResNet (WRN) model \cite{zagoruyko2016wide}. Note that due to the computation capacity of our graphic card, we set the width factor to $4$ and the depth to $16$. Moreover, we also use ResNet-18 \cite{he2016deep} model against the Cifar-100 and Tiny-ImageNet benchmarks for performance comparison. 

\noindent\textbf{Training Details.}
The default implementation settings of the developed G-Mix learning framework are listed as follows: i) the number of training epochs $T$ is set to $100$ for SVHN, $200$ for Cifar10, Cifar-100, and STL-10, and $500$ for Tiny-ImageNet; ii) the training batch size is set to $|\mathcal{B}| = 128$; iii) the initialized learning rate $\eta$ is set to $0.1$ with two schedulers that perform $\eta = 0.1 \eta$ correspondingly on the $0.65T$-th and $0.85T$-th epoch; iv) the values of hyper-parameters introduced in the paper are defined as $\alpha=1$, $\rho=0.5$, and $\gamma = 0.5$, and their ablation studies are also performed to evaluate their impacts on the learning performance. 

\noindent\textbf{Evaluation Metrics.}
We compare the proposed BG-Mix and DG-Mix algorithms with the following baselines: Mixup, SAM, and G-Mix. Note that for better understanding, we also introduce the learning performance without any weight perturbation or data augmentation, called Vanilla. To provide a fair comparison and accurate evaluation, we measure the performance of the considered approaches with multiple metrics. Specifically, consider the model outputs respectively on training and testing as $\hat{\bf{y}}_{train}$ and $\hat{\bf{y}}_{test}$, and the true labels as ${\bf{y}}_{train}$ and ${\bf{y}}_{test}$, we introduce the following two numerical metrics besides the standard convergence and computation analysis: $\bf{Test\textsubscript{Acc}}$ and $\bf{Gap}$, where the higher value indicates to the better performance:  
\begin{itemize}
	\item $\bf{Test\textsubscript{Acc}} = 100\% \times \frac{|\hat{\bf{y}}_\text{test} == {\bf{y}}_\text{test}|}{|{\bf{y}}_\text{test}|}$. 
	\item $\bf{Gap} = 100\% \times ( \frac{|\hat{\bf{y}}_\text{test} == {\bf{y}}_\text{test}|}{|{\bf{y}}_\text{test}|} - \frac{|\hat{\bf{y}}_\text{train} == {\bf{y}}_\text{train}|}{|{\bf{y}}_\text{train}|})$.
\end{itemize}
Note that to present the results in the tables clearly, we mark the top-$3$ compared algorithms according to each corresponding metric: the $1$-st is marked as \textbf{bold} and $\underline{underline}$; the $2$-nd is marked as \textbf{bold}; and the $3$-rd is marked as $\underline{underline}$. 
\begin{table}[tb]
	\centering
		\begin{adjustbox}{width=0.85\columnwidth,center}
			\begin{tabular}{*{7}{l}}
				\toprule
				\multicolumn{1}{l}{\bf{}} & \multicolumn{2}{c}{\bf{SVHN}} & \multicolumn{2}{c}{\bf{STL-10}} & \multicolumn{2}{c}{\bf{Cifar-10}} \\
				\bf{Method}& $\bf{Test\textsubscript{Acc}}$    & \bf{Gap} & $\bf{Test\textsubscript{Acc}}$   & \bf{Gap} & $\bf{Test\textsubscript{Acc}}$     & \bf{Gap} \\
				\midrule
				{Vanilla }&  93.24 & 19.78 &  64.76 & -9.91 &  88.61 & 12.13\\
				{SAM} & 94.44  & 21.34 &  66.41 & -6.36 &   89.89  & 14.09 \\
				\midrule 
				{Mixup }& 95.14 & 24.38 &  79.35 & 6.08 &  94.33 & 19.19 \\
				{G-Mix} &  $\underline{95.67}$  & $\underline{\bf{27.92}}$ &  $\underline{81.40}$ & $\underline{6.96}$ &  $\underline{94.65}$  & $\underline{20.58}$  \\
				{BG-Mix } & $\bf{95.83}$ & $\underline{27.09}$ &  $\bf{82.15}$ & $\bf{8.46}$ &  $\bf{95.10}$ & $\bf{20.94}$  \\
				{DG-Mix } & $\underline{\bf{95.87}}$   & $\bf{27.83}$ & $\underline{\bf{82.45}}$ & $\underline{\bf{9.39}}$ &  $\underline{\bf{95.33}}$   & $\underline{\bf{21.25}}$ \\
				\bottomrule
			\end{tabular} 
		\end{adjustbox}
		\caption{Performance comparison multiple datasets.}
	\label{Tab:comparison_small}
\end{table}
\subsection{Result Analysis}
\noindent\textbf{Convergence.}\label{Subsec:convergence}
We first analyze the convergence of learning on both the testing accuracy and the generalization gap, which show their experimental results in Fig.~\ref{fig:convergence} and \ref{fig:gap}, respectively. The testing accuracy curves in Fig.~\ref{fig:convergence} show that all of the considered training approaches provide a convergence guarantee under the experimental settings in this paper. It can be easily noticed that compared to the existing Mixup, SAM, and Vanilla methods, both the developed G-Mix framework and the proposed BG- and DG-Mix algorithms achieve a better testing accuracy convergence. Specifically, the performance of the introduced methods follows a stable relationship: DG-Mix $>$ BG-Mix $>$ G-Mix $\gg$ Mixup $\gg$ SAM $>$ Vanilla. 

For example, to achieve $70\%$ testing accuracy on the STL-10 benchmark, the proposed DG-Mix algorithm takes only $18$ training epochs, where $61$, $65$, and $87$ rounds for BG-Mix, G-Mix, and Mixup approaches, respectively. Additionally, we can also notice that compared to the difference in testing accuracy, the results of the generalization gap in Fig.~\ref{fig:gap} can be more significant. For example, though the proposed DG-Mix and BG-Mix algorithms achieve a similar testing accuracy on the Cifar-100 dataset against the G-Mix framework, the corresponding generalization gap results in Fig.~\ref{fig:gap_cifar100} indicate their superiority. Another interesting phenomenon is that SAM achieves a clear better generalization gap against the Vanilla, which supports the claim in \cite{foret2020sharpness} that SAM indeed leverages a better generalization ability. 

\noindent\textbf{Comparison Against Introduced Methods.}
We then summarize the performance comparison results for the compared methods on SVHN, STL-10, Cifar-10, Cifar-100, and TinyImageNet datasets in Table.~\ref{Tab:comparison_small} and \ref{Tab:comparison_big}. Note that the ``${\bf{Test}\textsubscript{Acc}}$" column indicates the best achieved testing accuracy and the $``\bf{Gap}"$ represents the best achieved generalization gap, respectively during the training process under the developed G-Mix framework. Note that for the tables in this paper, to properly illustrate the results, we mark the top-3 performance of introduced methods with the same training model according to each evaluation metric: 1-st as $\bf{bold}$ and $\underline{underline}$, 2-nd as $\bf{bold}$, and 3-rd as $\underline{underline}$. 
We can notice from the results in Table.~\ref{Tab:comparison_small} that, generally, both the proposed BG-Mix and DG-Mix algorithms and the developed G-Mix framework outperform the existing SAM and Mixup methods. Specifically, the DG-Mix reaches the best overall performance on both the best achieved testing accuracy and the generalization gap, and also the relationship mentioned in the convergence analysis is also supported. It can be observed that compared to the Mixup technique, the proposed DG-Mix algorithm achieves $0.77\%$ higher accuracy ($95.87\%$ versus $95.14\%$) and $14.2\%$ higher generalization gap ($27.83\%$ versus $24.38\%$). The advantage can be more obvious on the Cifar-100 dataset, where the DG-Mix algorithm outperforms Mixup by $3.9\%$ and $54.4\%$ higher accuracy and gap, respectively.   

Additionally, the results in Table.~\ref{Tab:comparison_big} show that the improvement of DG- and BG-Mix algorithms under the developed G-Mix framework can be consistent at different architectures. For example, for the Cifar-100 dataset, the BG-Mix (${\bf{Test}\textsubscript{Acc}}$ of $78.05\%$ and $``\bf{Gap}"$ of $8.21\%$) outperforms the Mixup method (${\bf{Test}\textsubscript{Acc}}$ of $76.88\%$ and $``\bf{Gap}"$ of $6.31\%$) with the ResNet-18 architecture, and the DG-Mix (${\bf{Test}\textsubscript{Acc}}$ of $77.21\%$ and $``\bf{Gap}"$ of $12.15\%$) outperforms the Mixup method (${\bf{Test}\textsubscript{Acc}}$ of $75.25\%$ and $``\bf{Gap}"$ of $5.57\%$) with the WRN architecture. Moreover, for the Tiny-ImageNet dataset, the DG-Mix (${\bf{Test}\textsubscript{Acc}}$ of $60.91\%$ and $``\bf{Gap}"$ of $-4.01\%$) outperforms the Mixup method (${\bf{Test}\textsubscript{Acc}}$ of $58.31\%$ and $``\bf{Gap}"$ of $-9.04\%$) with the WRN architecture, and correspondingly  (${\bf{Test}\textsubscript{Acc}}$ of $60.00\%$ and $``\bf{Gap}"$ of $-8.83\%$) against (${\bf{Test}\textsubscript{Acc}}$ of $59.01\%$ and $``\bf{Gap}"$ of $-12.01\%$) with the ResNet-18 architecture. 

Meanwhile, the results show that though we reduce the width and depth of the original WideResNet architecture in \cite{zagoruyko2016wide}, it still achieves a competitive ${\bf{Test}\textsubscript{Acc}}$ against the ResNet-18 architecture. Another interesting phenomenon is that compared to the ResNet-18 architecture, the overall generalization gap is significantly higher with the WRN architecture as shown in Table.~\ref{Tab:comparison_big}. We consider this might be because of the fact that a wider DNN model architecture can probably provide a better generalization ability. 
\begin{table}[tb]
	\centering
		\begin{adjustbox}{width=0.85\columnwidth,center}
			\begin{tabular}{*{6}{l}}
				\toprule
				\multicolumn{1}{l}{\bf{}} & \multicolumn{1}{l}{}  & \multicolumn{2}{c}{\bf{Cifar-100}} & \multicolumn{2}{c}{\bf{Tiny-ImageNet}} \\
				\bf{Method} &\bf{Model} & $\bf{Test\textsubscript{Acc}}$   & \bf{Gap} & $\bf{Test\textsubscript{Acc}}$    & \bf{Gap} \\
				\midrule
				\multirow{2}{*}{Vanilla } & WRN  &  63.36 & -8.75 & 42.70 & -28.91\\
				& ResNet18 & 57.11 & -17.60 & 37.91 & -35.12\\
				\multirow{2}{*}{SAM} & WRN & 65.59 & -6.40 & 45.69 & -25.12\\
				& ResNet18 & 60.46 & -10.67 & 41.06 & -31.74\\
				\midrule 
				\multirow{2}{*}{Mixup } & WRN& 75.25 & 5.57 & 58.31 & -9.04  \\
				& ResNet18 &  76.68 & 6.31& 59.01 & -12.01  \\
				\multirow{2}{*}{G-Mix} & WRN &  $\underline{76.56}$  & $\underline{10.34}$ &  $\underline{60.05}$  & $\bf{-5.48}$   \\
				& ResNet18&  $\underline{77.79}$ & $\underline{7.09}$  &  $\bf{59.95}$ & $\bf{-9.55}$ \\
				\multirow{2}{*}{BG-Mix } & WRN & ${\bf{76.67}}$ & $\bf{10.49}$ & $\bf{60.52}$ & $\underline{-5.68}$  \\
				& ResNet18 & $\underline{\bf{78.05}}$ & $\bf{8.21}$  & $\underline{59.70}$  & $\underline{-9.59}$\\
				\multirow{2}{*}{DG-Mix } & WRN & $\underline{\bf{77.21}}$   & $\underline{\bf{12.15}}$ & $\underline{\bf{60.91}}$   & $\underline{\bf{-4.01}}$ \\
				& ResNet18 & $\bf{77.90}$   & $\underline{\bf{8.54}}$  & $\underline{\bf{60.00}}$   & $\underline{\bf{-8.83}}$ \\
				\bottomrule
			\end{tabular} 
		\end{adjustbox}
		\caption{Performance comparison of more challenging datasets on different DNN models.}
		\label{Tab:comparison_big}
\end{table}

\begin{figure*}[tb]
	\centering
	\begin{subfigure}{0.24\columnwidth}
		\includegraphics[width = 1\columnwidth]{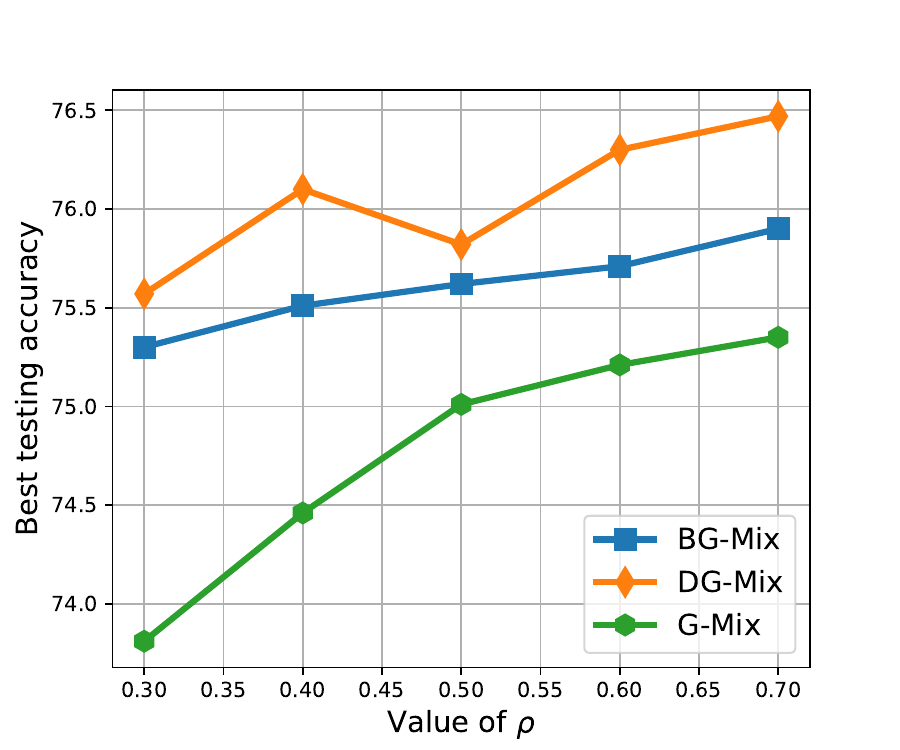}
		\caption{STL-10: ${\bf{Test}\textsubscript{Acc}}$}
		\label{fig:hyper_rho_acc_stl10}
	\end{subfigure}
	\begin{subfigure}{0.24\columnwidth}
		\includegraphics[width = 1\columnwidth]{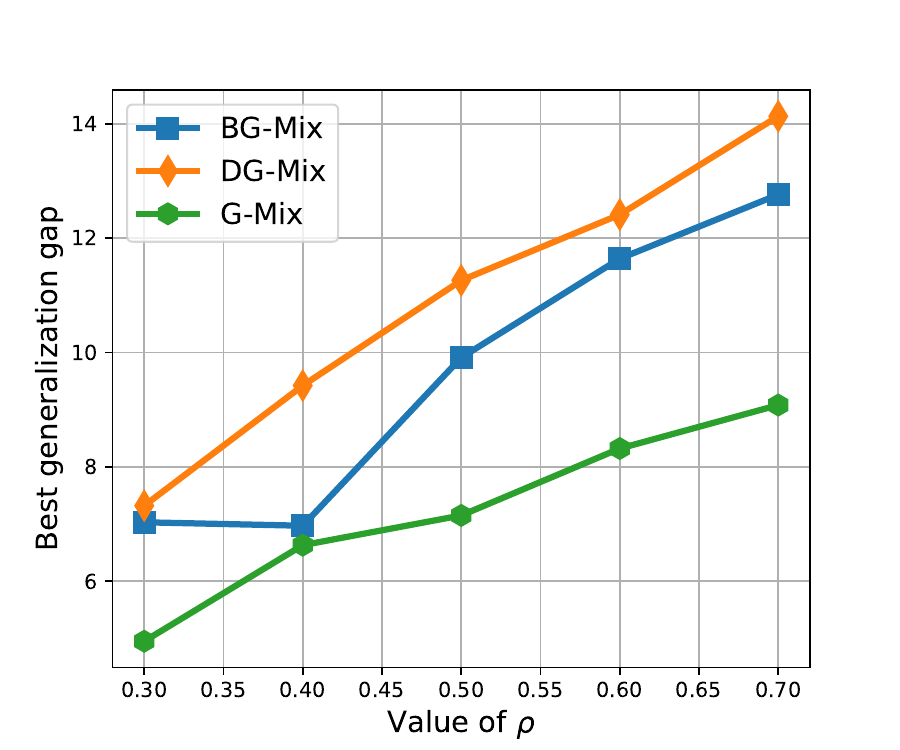}
		\caption{STL-10: ${\bf{Gap}}$}
		\label{fig:hyper_rho_gap_stl10}
	\end{subfigure}
	\begin{subfigure}{0.24\columnwidth}
		\includegraphics[width = 1\columnwidth]{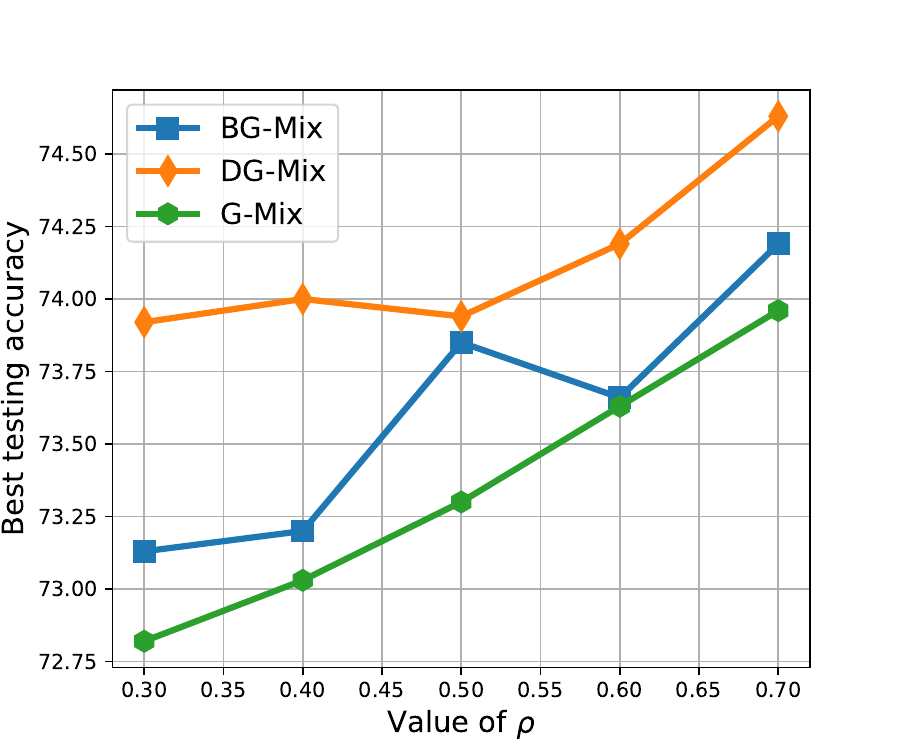}
		\caption{Cifar-100: ${\bf{Test}\textsubscript{Acc}}$}
		\label{fig:hyper_rho_acc_cifar100}
	\end{subfigure}
	\begin{subfigure}{0.24\columnwidth}
		\includegraphics[width = 1\columnwidth]{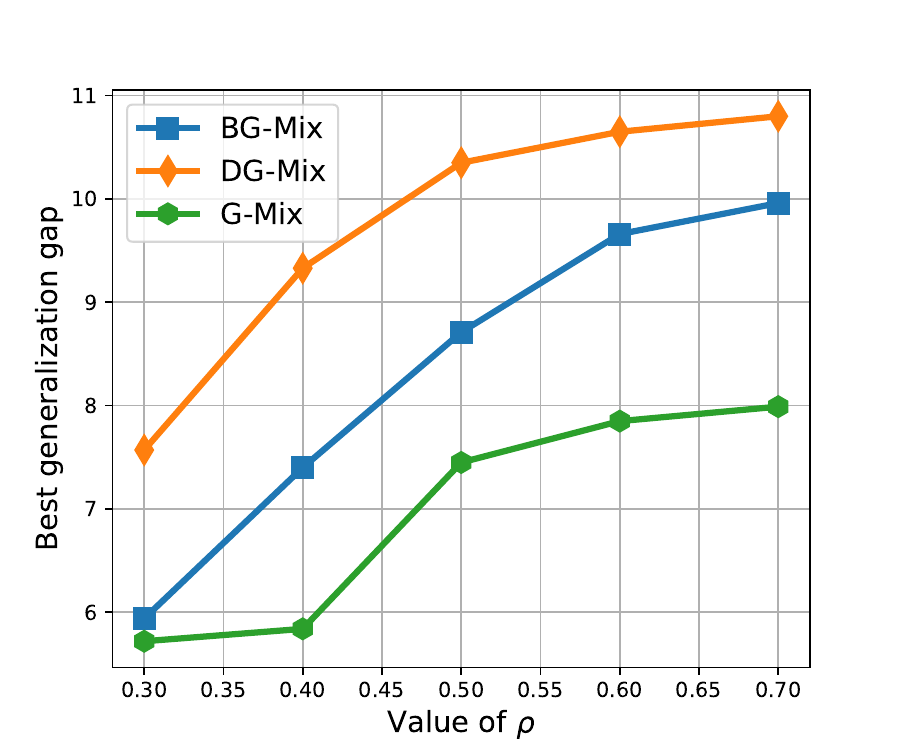}
		\caption{Cifar-100: ${\bf{Gap}}$}
		\label{fig:hyper_rho__gap_cifar100}
	\end{subfigure}
	\caption{Impact of $\rho$ to the proposed BG-Mix and DG-Mix algorithms against the STL-10 and Cifar-100 datasets under the developed G-Mix framework, $\gamma = 0.5$, $|\mathcal{B}|= 512$.}
	\label{fig:hyper_rho}
\end{figure*}

\begin{figure*}[tb]
	\centering
	\begin{subfigure}{0.24\columnwidth}
		\includegraphics[width = 1\columnwidth]{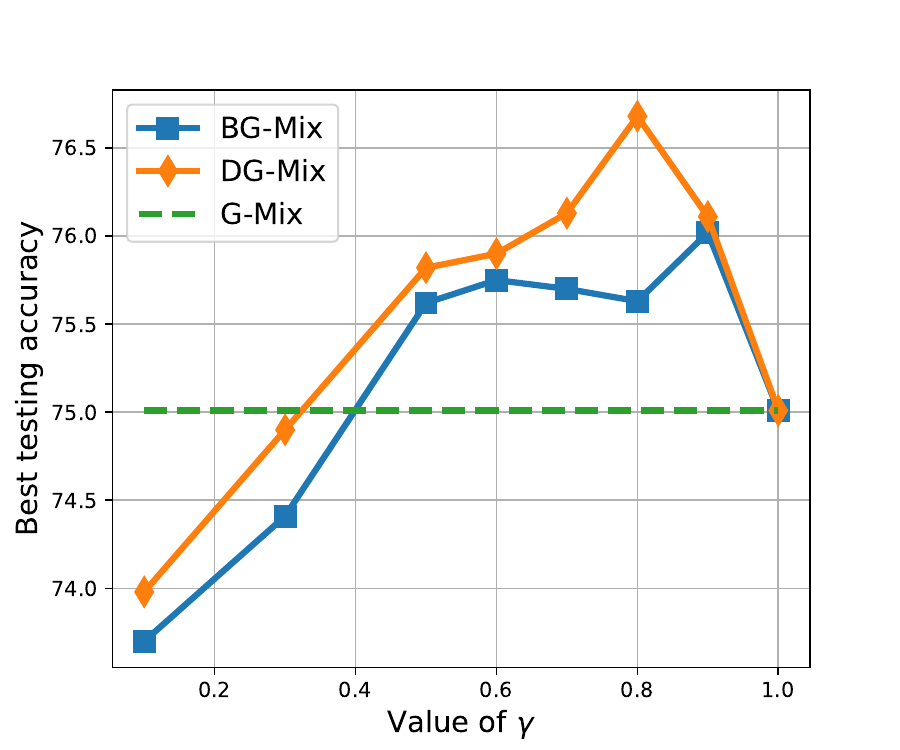}
		\caption{STL-10: ${\bf{Test}\textsubscript{Acc}}$}
		\label{fig:hyper_gamma_acc_stl10}
	\end{subfigure}
	\begin{subfigure}{0.24\columnwidth}
		\includegraphics[width = 1\columnwidth]{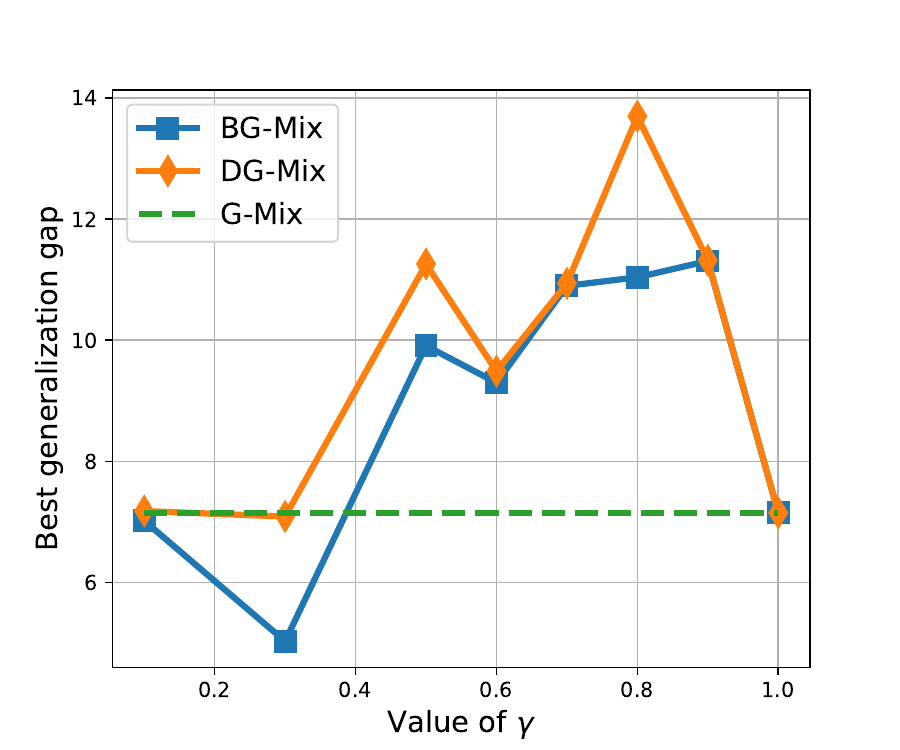}
		\caption{STL-10: ${\bf{Gap}}$}
		\label{fig:hyper_gamma_gap_stl10}
	\end{subfigure}
	\begin{subfigure}{0.24\columnwidth}
		\includegraphics[width = 1\columnwidth]{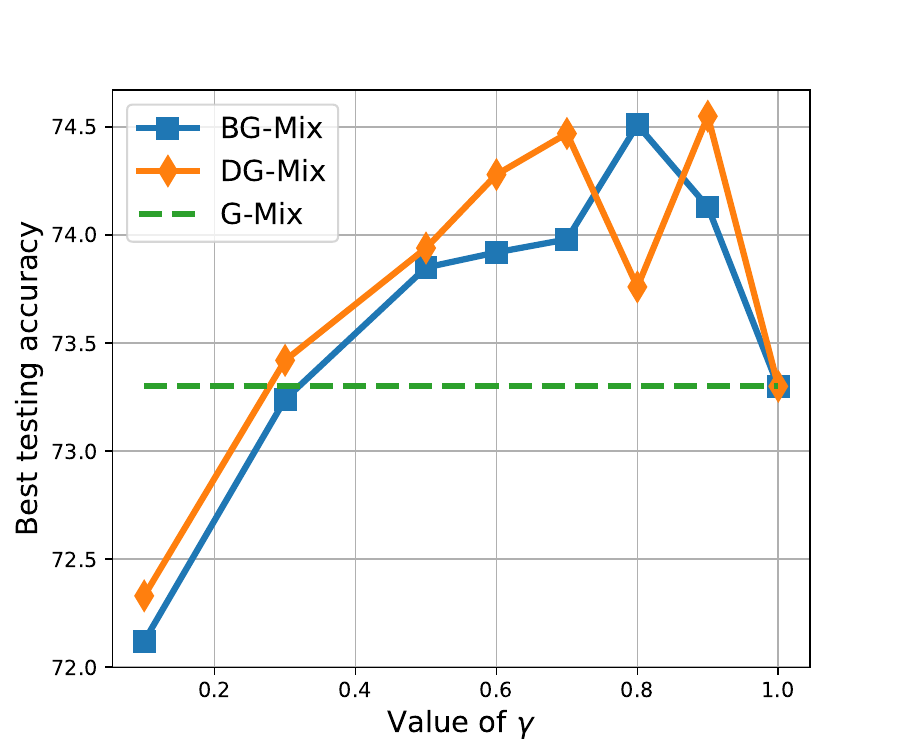}
		\caption{Cifar-100: ${\bf{Test}\textsubscript{Acc}}$}
		\label{fig:hyper_gamma_acc_cifar100}
	\end{subfigure}
	\begin{subfigure}{0.24\columnwidth}
		\includegraphics[width = 1\columnwidth]{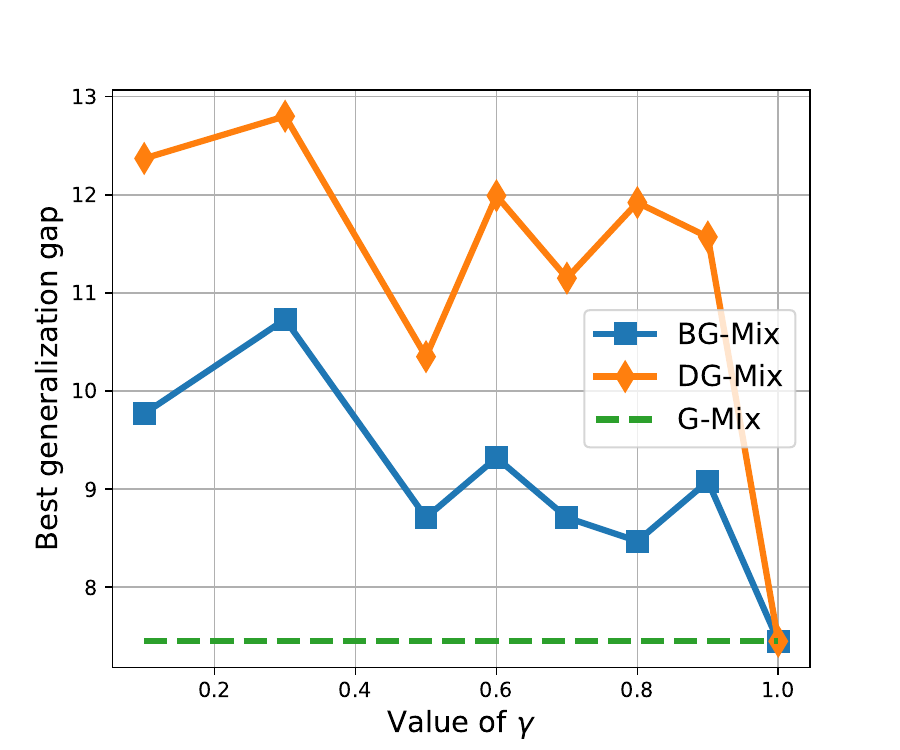}
		\caption{Cifar-100: ${\bf{Gap}}$}
		\label{fig:hyper_gamma__gap_cifar100}
	\end{subfigure}
	\caption{Impact of $\gamma$ to the proposed BG-Mix and DG-Mix algorithms against the STL-10 and Cifar-100 datasets under the developed G-Mix framework, $\rho = 0.5$, $|\mathcal{B}|= 512$.}
	\label{fig:hyper_gamma}
\end{figure*}

\noindent\textbf{Impact of Batch Size.}
Next, we vary the size of the training batch from $32$ to $512$ for the compared methods with the WRN architecture on the STL-10 and Cifar-100 datasets. The results illustrated in Fig.~\ref{fig:batch} support our intuition that the performance of both the best testing accuracy and the generalization gap of the compared learning approaches satisfies an inverse relationship toward the size of the training batch. Under this condition, it can be noticed that the proposed algorithms in this paper are more robust against the existing Mixup and SAM methods. For example, on the Cifar-100 dataset, the reduction of ${\bf{Test}\textsubscript{Acc}}$ is $5.2\%$ (from $78.67\%$ to $74.57\%$), which outperforms the Mixup method with $9.8\%$ (from $77.89\%$ to $70.25\%$). 

Moreover, we also notice that there is a significant performance gap between those methods without and with the Mixup technique. For example, the generalization gaps of Vanilla and SAM against the STL-10 dataset are negative, while the other methods all achieve positive results. An intuitive explanation for this empirical phenomenon is that compared to the SAM method, the Mixup technique might play a more important role in the DNN model generalization ability improvement among the considered learning benchmarks in this paper. 

\noindent\textbf{Impact of $\rho$.}
In Fig.~\ref{fig:hyper_rho}, we demonstrate the performance of the proposed BG-Mix and DG-Mix algorithms with a varying $\rho$ value from $0.3$ to $0.7$, compared to the developed G-Mix framework. The results shown in Fig.~\ref{fig:hyper_rho} indicate a stable relationship between the compared methods in this experimental setting: DG-Mix $>$ BG-Mix $>$ G-Mix. Meanwhile, we can also notice that there is a slight positive relationship between the value of $\rho$ and both the best achieved ${\bf{Test}\textsubscript{Acc}}$ and ${\bf{Gap}}$. For example, on the STL-10 dataset, the best achieved testing accuracy of the DG-Mix algorithm increases $1.20\%$ (from $75.57\%$ to $76.47$). And on the Cifar-100 dataset, the ${\bf{Test}\textsubscript{Acc}}$ of the proposed BG-Mix algorithm increases $1.45\%$ (from $73.13\%$ to $74.19\%$). Moreover, such improvement is more obvious on the performance of ${\bf{Gap}}$, e.g., the best achieved generalization gap of DG-Mix algorithm against Cifar-100 dataset increases $42.67\%$ (from $7.57\%$ to $10.80\%$). 

\noindent\textbf{Impact of $\gamma$.}
In Fig.~\ref{fig:hyper_gamma}, we investigate the performance of the proposed BG-Mix and DG-Mix algorithms with a varying value of $\gamma$ from $0.1$ to $0.9$, under the developed G-Mix framework where $\rho = 0.5$ and $|\mathcal{B}| = 512$. Note that the green dotted line in Fig.~\ref{fig:hyper_gamma} indicates the performance of the G-Mix framework, whose learning result is not influenced by the change of $\gamma$ and could be viewed as a special case of the proposed BG-Mix and DG-Mix algorithms with $\gamma=1$. The results show that both the proposed BG-Mix and DG-Mix algorithms outperform the G-Mix framework in most cases. Specifically, it can be noticed that $\gamma = 0.3$ is the minimum value that is optimal for achieving a comparable performance against G-Mix. Furthermore, the results of ${\bf{Test}\textsubscript{Acc}}$ against STL-10 and Cifar-100 datasets indicate that when $\gamma \approx 0.8$, the DG-Mix might achieve the best ${\bf{Test}\textsubscript{Acc}}$, and when $\gamma \approx 0.9$, the BG-Mix achieves the best ${\bf{Test}\textsubscript{Acc}}$. Additionally, though the ${\bf{Gap}}$ results on Cifar-100 indicate a slightly reverse relationship between the value of $\gamma$ and the beat achieved generalization gap, both the BG- and DG-Mix also outperform G-Mix on each considered scenario.

\begin{table}[tb]
	\centering
	\begin{adjustbox}{width=0.85\columnwidth,center}
		\begin{tabular}{*{6}{l}}
			\toprule
			\bf{} & \bf{SVHN} & \bf{STL-10}   & \bf{Cifar-10} & \bf{Cifar-100}    & \bf{Tiny-ImageNet} \\
			\midrule
			{Vanilla } & ${787.6}$ &  ${1523.7}$  & 4404.8  & 4781.9& 23528.2\\
			{Mixup }& ${826.3}$ & ${1529.9}$& 4444.1 & 4791.8  & 23553.0  \\
			\midrule 
			{SAM}& $\bf{\underline{1157.4}}$ & $\bf{2407.2}$  & $\underline{8468.3}$ & $\bf{9109.2}$ & $\bf{45544.3}$\\
			{G-Mix} &$\underline{1198.5}$ &  $\underline{2409.1}$ &  $\bf{8466.2}$  & $\underline{9114.7}$  &  $\underline{45742.9}$   \\
			{BG-Mix } &$\bf{1186.3}$ &  $\bf{\underline{2376.0}}$ & $\bf{\underline{8465.4}}$ &  $\bf{\underline{9079.5}}$  & $\bf{\underline{45294.8}}$ \\
			{DG-Mix }&1320.4 & 2577.5 & 9164.8   & 9389.1  & $49452.4$ \\
			\bottomrule
		\end{tabular} 
	\end{adjustbox}
	\caption{Performance comparison of the computation cost from running time (seconds) against multiple datasets.}
	\label{Tab:comparison_time}
\end{table}

\noindent\textbf{Study of Computation Cost.}
To this end, Table.~\ref{Tab:comparison_time} shows the experimental results of the running time (seconds) for each compared method during the training process to converge with the default settings. Note that the marks in Table.~\ref{Tab:comparison_time} only consider the SAM-related methods, and the extra computation cost of SAM against standard SGD illustrated in \cite{foret2020sharpness} is not our concentration. It can be noticed from the results that compared to Vanilla and Mixup, the computation cost of the SAM-related approaches is almost doubled, we consider this might be because of the extra back-propagation operation of SAM.

Additionally, we can notice that the proposed BG-Mix algorithm achieves minimal computation cost, compared to DG-Mix and G-Mix, because ignoring the subset of training samples in $\mathcal{B}$ can improve the learning efficiency. Moreover, though the DG-Mix algorithms achieve the worst computation cost, the extra time is only linear and limited considering the doubled cost because of the applied SAM technique.

\section{Conclusion}\label{Sec:conclusion}
In this paper, we introduce the Generalized-Mixup (G-Mix) learning framework, which enhances the generalization ability of deep neural networks (DNNs). The G-Mix framework consists of a two-step learning strategy that combines the strengths of both the Sharpness-Aware Minimization (SAM) and Mixup techniques. In the first step, the training data is linearly interpolated to expand the training loss principle from Empirical Risk Minimization (ERM) to Vicinal Risk Minimization (VRM). Subsequently, the model is updated by incorporating a worst-case perturbation through an additional back-propagation operation. Through theoretical analysis, we provide insights into how the G-Mix framework improves the generalization ability of DNN models. To further optimize the performance within the G-Mix framework, we propose two novel algorithms, Binary G-Mix (BG-Mix) and Decomposed G-Mix (DG-Mix). Leveraging the two-step structure of G-Mix, we generate a Sharpness-sensitivity vector that evaluates the loss changes between the two back-propagation operations. This vector guides the weightiness of each example in the training batch during the Mixup process. Specifically, the BG-Mix algorithm excludes less sensitive examples based on an empirical threshold, which improves learning efficiency. On the other hand, the DG-Mix algorithm decomposes the loss of less sensitive examples, utilizing the orthogonal component to enhance performance. Extensive experimental analysis is conducted to validate the effectiveness of both the proposed algorithms and the developed G-Mix framework across multiple real-world learning applications. The results demonstrate the superior performance of our approaches in terms of generalization ability and learning efficiency.

\bibliographystyle{IEEEtran}  
\bibliography{main_arxiv}

\end{document}